\documentclass[lettersize,journal]{IEEEtran}

\usepackage{amsmath,bm,bbm,amsfonts,amssymb}
\usepackage{pifont}

\newcommand{\cmark}{\ding{51}}
\newcommand{\xmark}{\ding{55}}

\usepackage{soul,colortbl}
\usepackage[table,dvipsnames]{xcolor} 
\usepackage{color}

\usepackage{multirow}
\usepackage{tabularx}
\usepackage{booktabs}
\usepackage{pict2e}
\usepackage{url}
\usepackage{etoolbox}
\usepackage{orcidlink}
\usepackage{fontawesome5}
\usepackage{overpic}
\definecolor{cvprblue}{rgb}{0.21,0.49,0.74}
\definecolor{LightCyan}{rgb}{0.88,1,1}
\definecolor{gcolor}{RGB}{40,160,70}
\definecolor{ycolor}{RGB}{222,158,20}
\usepackage{hyperref}
\hypersetup{
     colorlinks=true,
     linkcolor=blue,
     citecolor=blue,
     filecolor=blue,
     urlcolor=magenta,
}
\usepackage{cite}

\usepackage{makecell}
\usepackage{siunitx} \usepackage[linesnumbered,ruled,vlined]{algorithm2e}
\usepackage[capitalise]{cleveref}
\Crefname{section}{Sec.}{Sects.}
\crefname{figure}{Fig.}{Figs.}
\Crefname{figure}{Fig.}{Figs.}
\crefname{table}{Tab.}{Tabs.}
\Crefname{table}{Tab.}{Tabs.}
\crefname{appendix}{}{Appendixes}
\Crefname{appendix}{}{Appendixes}
\crefname{proposition}{Proposition}{Propositions}
\Crefname{proposition}{Proposition}{Propositions}

\crefformat{equation}{Eq.~{#2(#1)#3}}

\newtheorem{proposition}{Proposition}

\usepackage{subcaption}
\usepackage{arydshln}
\usepackage{tikz}

\NewDocumentCommand{\emojiBR}{ O{blue} O{\normalsize} m }{\put(1,4){\color{#1}#2 #3}}

\usepackage{algorithmic}

\usepackage{contour}
\contourlength{0.1em} 

\newcommand{\myPara}[1]{\textbf{#1}.}

\definecolor{currentOrange}{RGB}{255, 165, 0} 
\definecolor{deviatedBlue}{RGB}{0, 100, 200} 
\definecolor{sessionGreen}{RGB}{0, 150, 0} 
\definecolor{oldYellow}{RGB}{255, 204, 0} 
\definecolor{scoreRed}{RGB}{255, 0, 0} 

\definecolor{rowgrayblue}{RGB}{245,248,250}
\definecolor{rowgraygreen}{RGB}{246,249,246}
\definecolor{rowgrayyellow}{RGB}{250,249,243}
\definecolor{rowgraypink}{RGB}{250,246,247}

\newcommand{\std}[1]{{\normalfont\color{gray}\ensuremath{\,\pm\,}#1}}

\begin{document}

\title{MePo++: Unifying Representation Refinement and Reconciliation for General Continual Learning}

\author{
    Guanglong Sun$^{\orcidlink{0009-0001-8403-1925}}$, Kanglei Zhou$^{\orcidlink{0000-0002-4660-581X}}$, Liyuan Wang$^{\orcidlink{0009-0002-7797-325X}}$,~\IEEEmembership{Member,~IEEE}, Qi Cheng, Hongwei Yan$^{\orcidlink{0009-0009-6174-8390}}$, Shuang Cui$^{\orcidlink{0000-0001-9293-7316}}$, \\ Hang Su$^{\orcidlink{0000-0001-8294-6315}}$, Jun Zhu$^{\orcidlink{0000-0002-6254-2388}}$,~\IEEEmembership{Fellow,~IEEE}, and Yi Zhong$^{\orcidlink{0000-0002-7927-5976}}$
    \thanks{
        Manuscript received \today.
        This work was supported by the NSFC Project Nos.~T2622023, 62406160, the Beijing Natural Science Foundation L247011, and the Beijing Major Science and Technology Project No. Z251100008425003. 
        \textit{(Guanglong Sun and Kanglei Zhou contributed equally to this work.)}
        \textit{(Corresponding author: Liyuan Wang and Yi Zhong).}
    }\thanks{
        Guanglong Sun, Hongwei Yan and Yi Zhong are with the School of Life Sciences, IDG/McGovern Institute for Brain Research, Tsinghua University, Beijing, China (e-mail: sgl23@mails.tsinghua.edu.cn; 
        yanhw22@mails.tsinghua.edu.cn;
        zhongyithu@tsinghua.edu.cn).
        
        Kanglei Zhou, Qi Cheng, and Liyuan Wang are with the Department of Psychological and Cognitive Sciences, Tsinghua University, Beijing, China (e-mail: zhoukanglei@tsinghua.edu.cn; liyuanwang@tsinghua.edu.cn).
        
        Shuang Cui is with the Institute of Software, Chinese Academy of Sciences, Beijing, China (e-mail: cuishuang21@mails.ucas.ac.cn).
        
        Hang Su and Jun Zhu are with the Department of Computer Science and Technology, Institute for AI, BNRist Center, THBI Lab, Tsinghua-Bosch Joint Center for ML, Tsinghua University, Beijing, China (email: suhangss@mail.tsinghua.edu.cn; dcszj@tsinghua.edu.cn).
    }
}
\markboth{Journal of \LaTeX~Class Files,~Vol.~XX, No.~XX, XX~XXXX}{Shell \MakeLowercase{\textit{et al.}}: A Sample Article Using IEEEtran.cls for IEEE Journals}

\maketitle

\begin{abstract}
General continual learning (GCL) aims to learn from evolving data streams without task identities, explicit boundaries, or repeated access to previous data, making it a realistic yet challenging setting for continual intelligence. Although pretrained models (PTMs) provide rich prior knowledge for addressing the limited supervision and non-stationary nature of GCL, existing PTM-based methods often directly adapt pretrained representations and overlook two critical gaps: the misalignment between upstream pretraining and downstream continual adaptation, and the unreliability of conventional output alignment under blurry streams. Here we propose \textbf{MePo++}, a unified post-training framework that bridges pretrained knowledge and downstream GCL through representation refinement and reconciliation. MePo++ introduces two complementary components: \textbf{MetaPrep}, which improves representation plasticity for continual adaptation through unsupervised meta-refinement over pseudo continual sequences; and \textbf{StreamAlign}, which reinforces representation stability by reconciling evolving online features with a stable pretrained geometry. By improving representation learnability before adaptation and preserving alignment during continual learning, MePo++ enables PTMs to remain both plastic for new concepts and stable over evolving streams. Experiments across diverse PTMs, datasets, and continual learning baselines demonstrate the consistent effectiveness and generality of MePo++ for PTM-based GCL. Our code is available at \url{https://github.com/SunGL001/MePo_Plus}.
\end{abstract}

\begin{IEEEkeywords}
    General Continual Learning, Catastrophic Forgetting, Post-Training, Representation Learning
\end{IEEEkeywords}

\section{Introduction}
\IEEEPARstart{H}{uman} intelligence can continuously acquire new knowledge from evolving experiences while retaining and reusing previously learned information. Inspired by this capability, continual learning (CL) aims to develop artificial systems that adapt to changing data distributions without catastrophically forgetting earlier knowledge~\cite{wang2024comprehensive,van2019three}. However, conventional CL assumes clearly separated tasks, explicit boundaries, and task identities, which simplify learning and deviate from real-world environments. General continual learning (GCL) offers a more realistic formulation with online streams, blurry transitions, temporally mixed concepts, unavailable task identities, and limited memory~\cite{buzzega2020dark,de2021continual,moon2023online,kang2025advancing}. These conditions make GCL harder and, in turn, require rapid adaptation, long-term retention, and robust learning as concepts and distributions evolve over time during deployment.

\begin{figure}
    \centering
    \includegraphics[width=\linewidth,clip,trim=230 150 225 150]{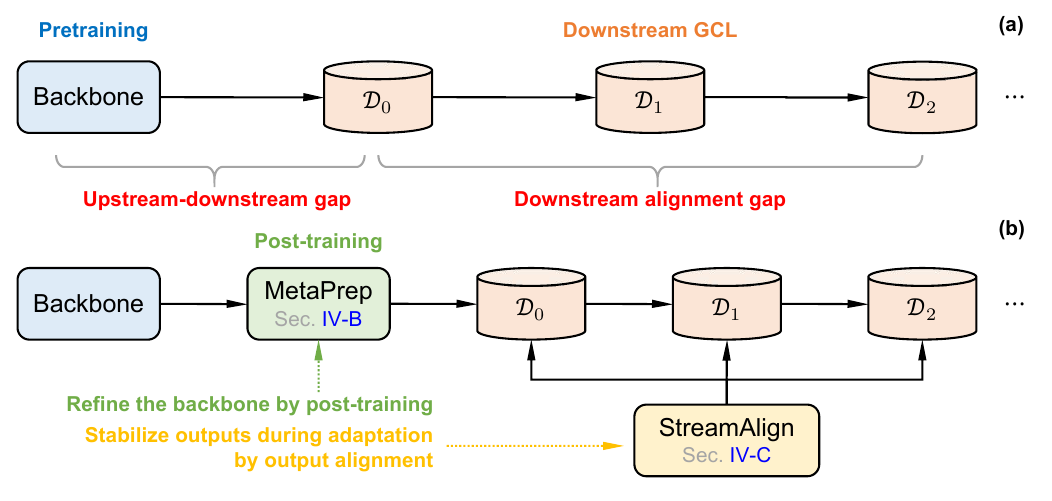}
    \begin{overpic}[width=\linewidth]{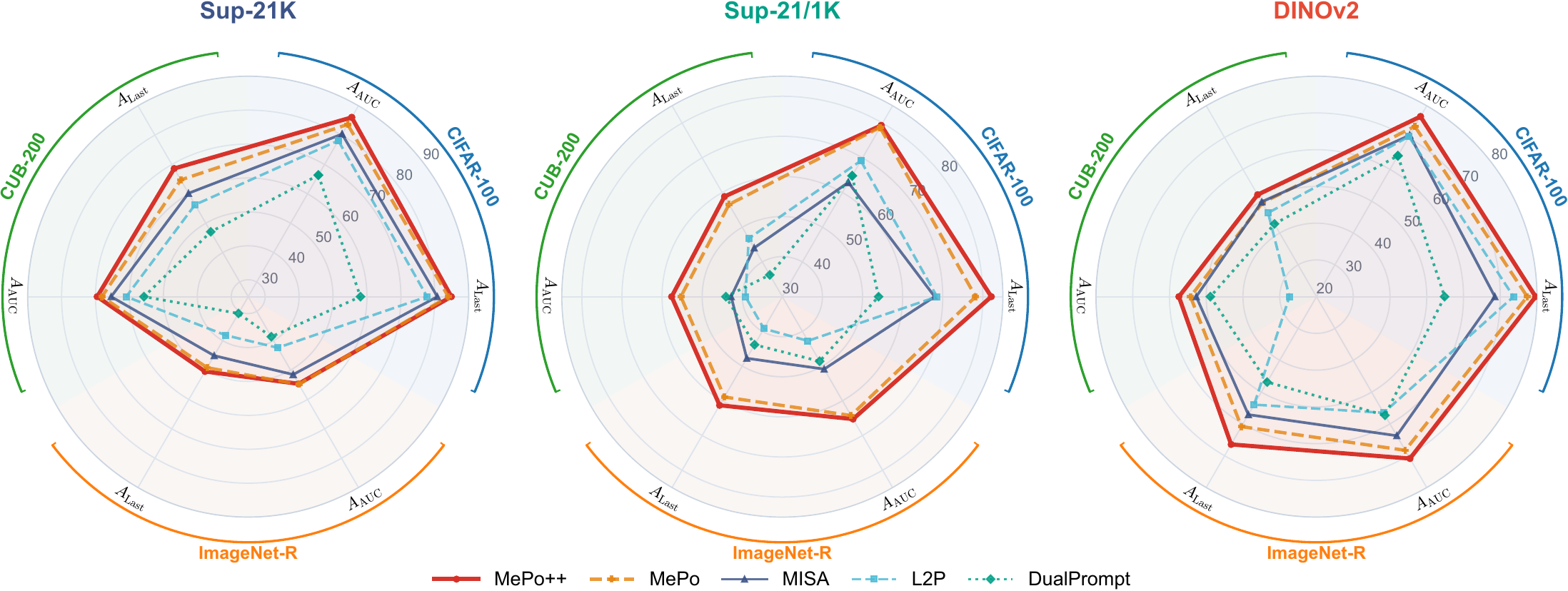}
        \put(0,36){\sf\tiny\textbf{(c)}}
        \put(32,36){\sf\tiny\textbf{(d)}}
        \put(64,36){\sf\tiny\textbf{(e)}}
    \end{overpic}
    \caption{
    \textbf{Motivation and performance overview.}
    (a) Existing PTM-based GCL directly adapts pretrained representations to evolving data streams, leading to an upstream-downstream gap and a downstream alignment gap.
    (b) MePo++ addresses the two gaps through MetaPrep, which refines pretrained representations before deployment, and StreamAlign, which reconciles evolving representations during continual adaptation.
    (c--e) Performance comparisons under Sup-21K, Sup-21/1K, and DINOv2 pretraining across CIFAR-100, ImageNet-R, and CUB-200, demonstrating consistent improvements over representative GCL baselines across all evaluated settings.
    }
    \label{fig:teaser}
    \phantomsubcaption\label{fig:teaser-a}
    \phantomsubcaption\label{fig:teaser-b}
    \phantomsubcaption\label{fig:teaser-c}
    \phantomsubcaption\label{fig:teaser-d}
    \phantomsubcaption\label{fig:teaser-e}
\end{figure}

Meeting these requirements remains difficult for existing CL methods. They address forgetting through synaptic regularization \cite{kirkpatrick2017overcoming}, architectural isolation \cite{rebuffi2017icarl}, or replay \cite{buzzega2020dark}, but most assume task-incremental protocols with explicit boundaries and ample task-wise data. Consequently, they transfer poorly to temporally mixed, single-pass streams \cite{wang2024comprehensive}. Dedicated GCL methods \cite{kang2025advancing,yan2026flyprompt} relax these assumptions but often train representations from scratch and rely on replay, reducing sample efficiency and adding storage and privacy costs. PTMs offer a promising alternative by providing transferable prior knowledge before deployment \cite{wang2024hide}. This raises a central research question: \textbf{How can PTM knowledge support realistic GCL?}

We therefore examine PTM-based GCL and identify two fundamental gaps (see \cref{fig:teaser-a}). First, strong upstream pretraining does not guarantee effective continual adaptation. Existing methods usually inherit pretrained representations and optimize only lightweight modules, yet sparse, single-pass, temporally mixed observations can still cause limited acquisition or severe interference. This \textbf{upstream-downstream misalignment} means generic representations are insufficiently prepared for rapid, stable adaptation. Second, output alignment becomes unreliable when task boundaries are blurry: task-wise feature distributions or class statistics are incomplete and biased as old and new concepts mix. This \textbf{downstream alignment gap} requires maintaining a coherent representation and decision space without explicit boundaries or reliable task statistics. Together, the gaps impose a plasticity--stability requirement: representations must absorb new concepts while resisting harmful drift. This requirement echoes biological learning, where prior experience regulates neural plasticity and new memories integrate with persistent structure rather than overwrite it as experience accumulates over time~\cite{abraham2008metaplasticity,richards2017persistence,lei2022social,lei2024reconstructing}.

These observations motivate \textbf{MePo++}, a unified post-training framework with two complementary modules (see \cref{fig:teaser-b}). \textbf{MetaPrep} addresses the \textbf{upstream-downstream misalignment} before deployment: it clusters features from unlabeled meta data, arranges the pseudo-classes into continual sequences, and applies bi-level meta-learning that simulates sequential adaptation and optimizes cross-sequence generalization. The result is an initialization better suited to acquiring new concepts while reducing destructive interference from sparse, mixed updates. \textbf{StreamAlign} addresses the \textbf{downstream alignment gap} during deployment without task boundaries or task-wise statistics. It uses the refined representation geometry as a stable reference, reconstructs each transient online representation toward this geometry, and applies supervised contrastive reconciliation to keep same-class features close and different-class features separate. Together, MetaPrep prepares representations before deployment, while StreamAlign preserves stable and discriminative features throughout the evolving data stream under non-stationary online conditions.

Extensive experiments demonstrate the consistent effectiveness and generality of MePo++ across diverse PTMs and GCL settings. As shown in \cref{fig:teaser-c,fig:teaser-d,fig:teaser-e}, MePo++ surpasses the strongest baselines in all 18 combinations of three PTMs, three datasets, and two CL metrics, with an average gain of 7.93 percentage points (14.10\% relative). The average improvements reach 5.01, 12.88, and 5.90 percentage points for Sup-21K, Sup-21/1K, and DINOv2, respectively, and 7.57, 8.70, and 7.53 points on CIFAR-100, ImageNet-R, and CUB-200, respectively, with relative gains up to 30.69\%. Under the more challenging few-shot GCL setting with only 20\% of the training data, MePo++ also consistently outperforms strong baselines across different PTMs and downstream learners, further demonstrating its robustness under limited observations.

Our preliminary version, MePo (Meta Post-refinement)~\cite{sun2026mepo}, pioneered the exploration of post-training pretrained representations for GCL. MePo++ substantially extends MePo in three aspects. First, MetaPrep replaces label-dependent sequence construction with unsupervised clustering over pretrained features, enabling scalable post-training on large-scale unlabeled data. Second, StreamAlign extends covariance-based feature interpolation into semantic representation reconciliation through supervised contrastive learning, preserving both geometric stability and class discriminability. Third, we broaden the evaluation beyond standard GCL to few-shot GCL and CLIP-based continual learning, examining the framework under both limited downstream supervision and vision-language pretraining. Together, these extensions transform MePo from a label-dependent refinement approach into a more general post-training framework for PTM-based GCL across diverse deployment conditions.

Our main contributions are summarized as follows:
\begin{itemize}
\item We identify two bottlenecks in PTM-based GCL: pretrained representations lack continual preparedness, and output alignment becomes unreliable without task boundaries or complete statistics. These gaps expose the need to balance plasticity and stability under sparse, mixed, single-pass online observations in realistic deployments.

\item We propose \textbf{MePo++}, a unified post-training framework that prepares pretrained representations before deployment and aligns them throughout the stream. It requires no task identities, supports diverse backbones and learners, and operates under single-pass incoming streams with evolving concepts and distributions.

\item We develop two complementary modules: \textbf{MetaPrep} clusters unlabeled features and applies bi-level meta-refinement to build a GCL-ready initialization; \textbf{StreamAlign} aligns online features to a stable geometry prior while preserving semantic consistency. Together, they balance rapid acquisition and stable discrimination under sparse, mixed, single-pass streams.

\item We demonstrate consistent gains across standard, few-shot, and CLIP-based continual learning with diverse PTMs and learners. Comparisons, ablations, sensitivity tests, and representation analyses validate the complementary roles of MetaPrep and StreamAlign across diverse continual-learning settings and datasets.

\end{itemize}

\section{Related Work}

\myPara{General Continual Learning}
Continual learning aims to enable models to continuously acquire new knowledge while preserving previously learned information, yet limited by catastrophic forgetting. Regularization-based methods constrain parameter updates by preserving important model components, such as EWC~\cite{kirkpatrick2017overcoming}, LwF~\cite{Li17learning}, and synaptic intelligence~\cite{Zenke17}. 
Replay-based methods maintain representative historical samples or generated data to alleviate forgetting, including experience replay~\cite{rolnick2019experience}, iCaRL~\cite{rebuffi2017icarl}, and dark experience replay~\cite{buzzega2020dark}.
Although effective, most conventional CL methods assume explicit task boundaries or task identities, which are rarely available in real-world scenarios.
GCL extends CL toward more realistic online environments with blurry task transitions, unknown task identities, and limited memory resources~\cite{wang2024comprehensive}.
Under such single-pass streams, a line of work focuses on which samples to retain and how to schedule them, including gradient-based sample selection~\cite{aljundi2019gradient}, diversity-aware memory construction~\cite{bang2021rainbow}, importance-based online sampling~\cite{koh2021online}, and coordinated replay scheduling~\cite{yan2024orchestrate}.
Recent GCL methods investigate efficient adaptation under continuous data streams through experience replay~\cite{buzzega2020dark,fini2020online}, prompt-based adaptation~\cite{moon2023online}, and dynamic routing mechanisms~\cite{yan2026flyprompt}.
However, existing GCL methods mainly optimize the downstream learner after deployment, leaving the continual suitability of pretrained representations largely unexamined under sparse, blurry online streams.

\myPara{Pretrained Models for Continual Learning}
Pretrained models provide transferable representations that support parameter-efficient CL through prompts and adapters. Prompt-based methods such as L2P~\cite{wang2022learning} and DualPrompt~\cite{wang2022dualprompt} selectively activate pretrained knowledge while limiting updates, and later work improves prompt optimization~\cite{wang2024hierarchical}. A complementary direction keeps the backbone frozen and calibrates the output space instead, for example through random projection with class-wise second-order statistics~\cite{mcdonnell2024ranpac}. Beyond vision-only encoders, vision-language models have also been adapted for CL through gradient-aware modulation~\cite{huang2025mind}, mixture-of-expert adapters~\cite{yu2024moeadapters}, probabilistic prompt adaptation~\cite{jha2024clap4clip}, and online low-rank updates~\cite{wei2025onlinelora}. However, these methods generally assume that pretrained representations are already suitable for CL. By optimizing only in-stream adaptation, they overlook whether the initial representation can learn continually from sparse, temporally mixed observations, motivating dedicated post-training before deployment.

\myPara{Representation Learning and Post-training Adaptation}
Large-scale self-supervised learning produces transferable visual features without extensive annotations~\cite{zhou2026comprehensive}, using contrastive methods such as SimCLR~\cite{chen2020simple} and MoCo~\cite{he2020momentum}, clustering methods such as DeepCluster~\cite{caron2018deep} and SwAV~\cite{caron2020unsupervised}, and masked objectives~\cite{he2022masked}. These advances highlight the importance of representation quality for downstream adaptation. Continual self-supervised methods study this connection~\cite{rao2019continual,caccia2021special,fini2022self}, but learn representations jointly with continual updates and therefore do not directly exploit powerful PTMs in evolving GCL streams. MePo++ instead introduces dedicated post-training: MetaPrep improves continual learnability before deployment, and StreamAlign maintains stability online without task boundaries or task-level statistics.

\section{Empirical Analysis}
\label{sec:empirical}

This section formulates PTM-based GCL, tests adaptation and output alignment under evolving blurry streams, and derives design principles for practical continual deployment.

\subsection{Problem Formulation}
\label{subsec:problem_formulation}

\myPara{General Continual Learning}
We consider the GCL setting, where an agent learns
from an online and continuously evolving data stream. Let
$\mathcal{D}=\{\mathcal{B}_{t}\}_{t=1}^{T}$ denote the entire data stream, where
$T$ is the total number of learning steps and $\mathcal{B}_{t}=\{(\mathbf{x}_{i},y_{i})\}_{i=1}^{N_t}$ denotes the mini-batch observed at step $t$. Here,
$N_t$ is the number of samples in the current mini-batch,
$\mathbf{x}_{i}\in\mathcal{X}$ represents an input sample, and
$y_i$ denotes its corresponding label. Unlike conventional task-incremental
CL, GCL assumes that explicit task identities and task
boundaries are unavailable. Instead, previously observed and newly emerging
concepts may coexist within the same mini-batch, resulting in continuously
evolving and blurry data streams. Each sample is processed only once,
requiring the learner to acquire new knowledge while preserving previously
learned information from past observations.

\myPara{PTM-based GCL}
We consider a pretrained encoder
$f_{\boldsymbol{\theta}_{0}}:\mathcal{X}\rightarrow\mathbb{R}^{d}$, where
$\boldsymbol{\theta}_{0}$ denotes the pretrained parameters and $d$ is the
feature dimension. The pretrained model provides an initial representation
before deployment, which is further adapted by a downstream GCL learner
$\mathcal{A}$. At step $t$, the learner updates the model parameters according
to $(\boldsymbol{\theta}_{t},\boldsymbol{\phi}_{t})=\mathcal{A}(\boldsymbol{\theta}_{t-1},\boldsymbol{\phi}_{t-1},\mathcal{B}_{t})$, where
$\boldsymbol{\theta}_{t}$ denotes the encoder parameters after adaptation and
$\boldsymbol{\phi}_{t}$ represents the downstream parameters, such as the
classifier or task-specific components. The resulting predictor at step $t$ is
defined as $h_t(\mathbf{x})=g_{\boldsymbol{\phi}_{t}}(f_{\boldsymbol{\theta}_{t}}(\mathbf{x}))$, where
$g_{\boldsymbol{\phi}_{t}}$ denotes the prediction head parameterized by
$\boldsymbol{\phi}_{t}$.

\myPara{Challenges in PTM-based GCL}
Although pretrained models provide strong transferable representations, their
pretraining objectives are generally independent of downstream GCL streams.
This discrepancy introduces two challenges specific to PTM-based GCL. First,
the upstream representations optimized for static recognition may not possess
the continual adaptability required by evolving streams, resulting in an
upstream-downstream misalignment. Second, maintaining a stable representation
space during online adaptation is challenging, as conventional output
alignment strategies often rely on task-dependent statistics that become
unreliable under blurry streams. These challenges motivate our empirical
investigation from two complementary perspectives: whether existing
representation adaptation strategies can improve the continual learnability of
pretrained representations, and whether conventional alignment strategies can
maintain reliable representations under realistic GCL conditions.

\begin{figure}
    \centering
    \includegraphics[width=\linewidth]{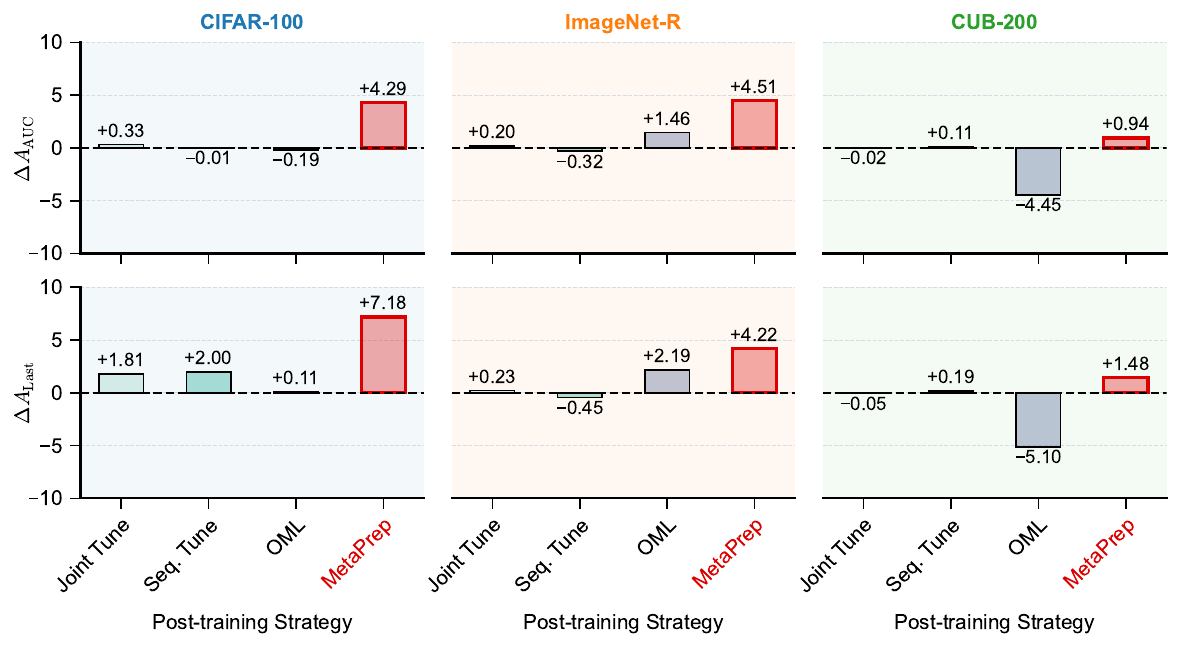}
    \caption{
    \textbf{Empirical analysis of post-training strategies for improving continual learnability.}
    We report the performance changes over the original DualPrompt with a Sup-21K pretrained backbone, where the baseline results are provided in \cref{tab:main}.
    Results on CIFAR-100, ImageNet-R, and CUB-200 show the effects of different post-training strategies on GCL performance.
    Positive and negative values indicate improvements and degradations compared with the original baseline, respectively.
    While existing post-training strategies for downstream adaptation yield limited or unstable gains, MetaPrep consistently improves both $A_{\mathrm{AUC}}$ and $A_{\mathrm{Last}}$, suggesting that pretrained representations benefit from dedicated optimization for continual adaptation.
    }
    \label{fig:empirical-post}
\end{figure}

\subsection{Continual Learnability of Pretrained Representations}
\label{subsec:continual_learnability}

A key challenge in PTM-based GCL is whether pretrained representations adapt effectively to evolving streams. Large-scale pretraining targets static recognition, whereas post-training for continual learnability remains underexplored. We therefore ask whether post-training can make pretrained representations suitable for evolving streams.

We compare three alternatives with increasing continual-learning awareness: \emph{Joint Tune} adapts on aggregated pretraining data, \emph{Seq. Tune} adapts sequentially to mimic an evolving stream, and \emph{OML} uses an online meta-learning objective~\cite{javed2019meta}. All use the same Sup-21K backbone, with changes measured against the original DualPrompt baseline~\cite{wang2022dualprompt}.
As shown in \cref{fig:empirical-post}, existing strategies provide limited and inconsistent improvements across CIFAR-100, ImageNet-R, and CUB-200. On CIFAR-100, Joint Tune, Seq. Tune, and OML change $A_{\mathrm{AUC}}$ by only $+0.33\%$, $-0.01\%$, and $-0.19\%$, respectively, while their improvements in $A_{\mathrm{Last}}$ are $+1.81\%$, $+2.00\%$, and $+0.11\%$. The inconsistency becomes more evident across datasets: OML improves $A_{\mathrm{AUC}}$ by $1.46\%$ on ImageNet-R but decreases it by $4.45\%$ on CUB-200. In comparison, the GCL-oriented post-training strategy consistently yields larger gains, improving $A_{\mathrm{AUC}}$ by $+4.29\%$, $+4.51\%$, and $+0.94\%$ and $A_{\mathrm{Last}}$ by $+7.18\%$, $+4.22\%$, and $+1.48\%$ across the three datasets. These results reveal an upstream-downstream misalignment in PTM-based GCL: post-training is a promising means of adapting pretrained representations, but existing objectives do not consistently optimize the continual learnability required by evolving GCL streams.

\subsection{Representation Alignment under Evolving Streams}
\label{subsec:representation_alignment}

Beyond initialization, PTM-based GCL must maintain a stable representation space during online adaptation. Output alignment can correct drift, but existing strategies rely on class-wise statistics from the stream. Without task boundaries, old and new concepts coexist, making these estimates incomplete and unstable. We therefore ask whether conventional output alignment remains reliable under blurry, evolving streams.

We compare three mechanisms with the same DualPrompt baseline~\cite{wang2022dualprompt} and Sup-21K backbone: \emph{SLCA-Out} uses SLCA's classifier alignment~\cite{zhang2023slca}, \emph{MVP-Out} uses MVP's output alignment~\cite{moon2023online}, and \emph{MISA-Out} uses MISA's alignment component~\cite{kang2025advancing}. Each corrects predictions using statistics accumulated during adaptation.
As shown in \cref{fig:empirical-out}, conventional alignment mechanisms exhibit substantially different behavior under GCL. SLCA-Out consistently degrades performance, reducing $A_{\mathrm{AUC}}$ by $3.65\%$, $3.76\%$, and $4.98\%$ and $A_{\mathrm{Last}}$ by $6.97\%$, $3.26\%$, and $4.56\%$ on CIFAR-100, ImageNet-R, and CUB-200, respectively. MVP-Out and MISA-Out generally improve the baseline, but their gains remain dataset- and metric-dependent: MVP-Out improves $A_{\mathrm{Last}}$ by $4.28\%$ on CIFAR-100 but only $1.88\%$ on CUB-200, while MISA-Out yields $A_{\mathrm{Last}}$ gains ranging from $1.06\%$ on ImageNet-R to $3.65\%$ on CUB-200. In comparison, the GCL-oriented alignment strategy consistently achieves larger gains, improving $A_{\mathrm{AUC}}$ by $2.74\%$, $3.84\%$, and $1.78\%$ and $A_{\mathrm{Last}}$ by $7.67\%$, $6.21\%$, and $5.80\%$ across the three datasets. These observations reveal a downstream alignment gap: output alignment can be beneficial, but its effectiveness is highly sensitive to the reliability of the statistics available from blurry and evolving streams.

\begin{figure}
    \centering
    \includegraphics[width=\linewidth]{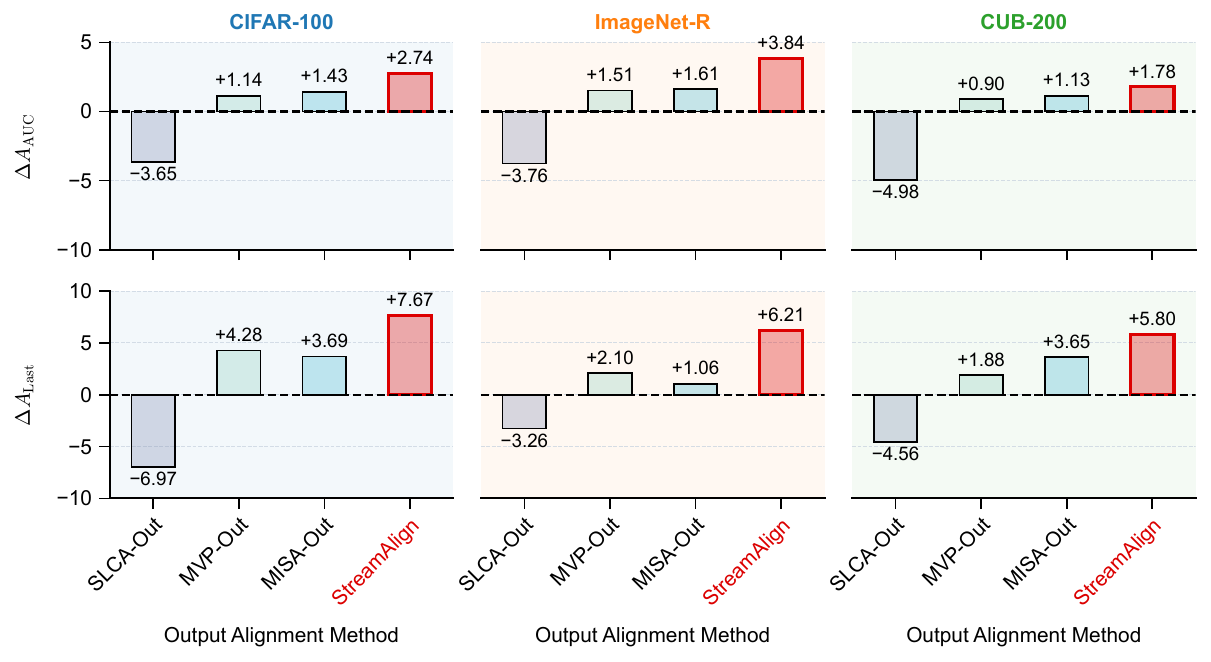}
    \caption{
    \textbf{Empirical analysis of output alignment strategies.}
    We report the performance improvement over the original DualPrompt baseline with a Sup-21K pretrained backbone, where the baseline results are provided in \cref{tab:main}.
    Results are shown for different output alignment methods on CIFAR-100, ImageNet-R, and CUB-200.
    While conventional alignment strategies provide limited and dataset-dependent improvements, StreamAlign consistently achieves stronger gains in both $A_{\mathrm{AUC}}$ and $A_{\mathrm{Last}}$, demonstrating the effectiveness of maintaining stable representation alignment under GCL.
    }
    \label{fig:empirical-out}
\end{figure}

\begin{figure*}
    \centering
    \includegraphics[width=\linewidth,clip,trim=10 90 0 50]{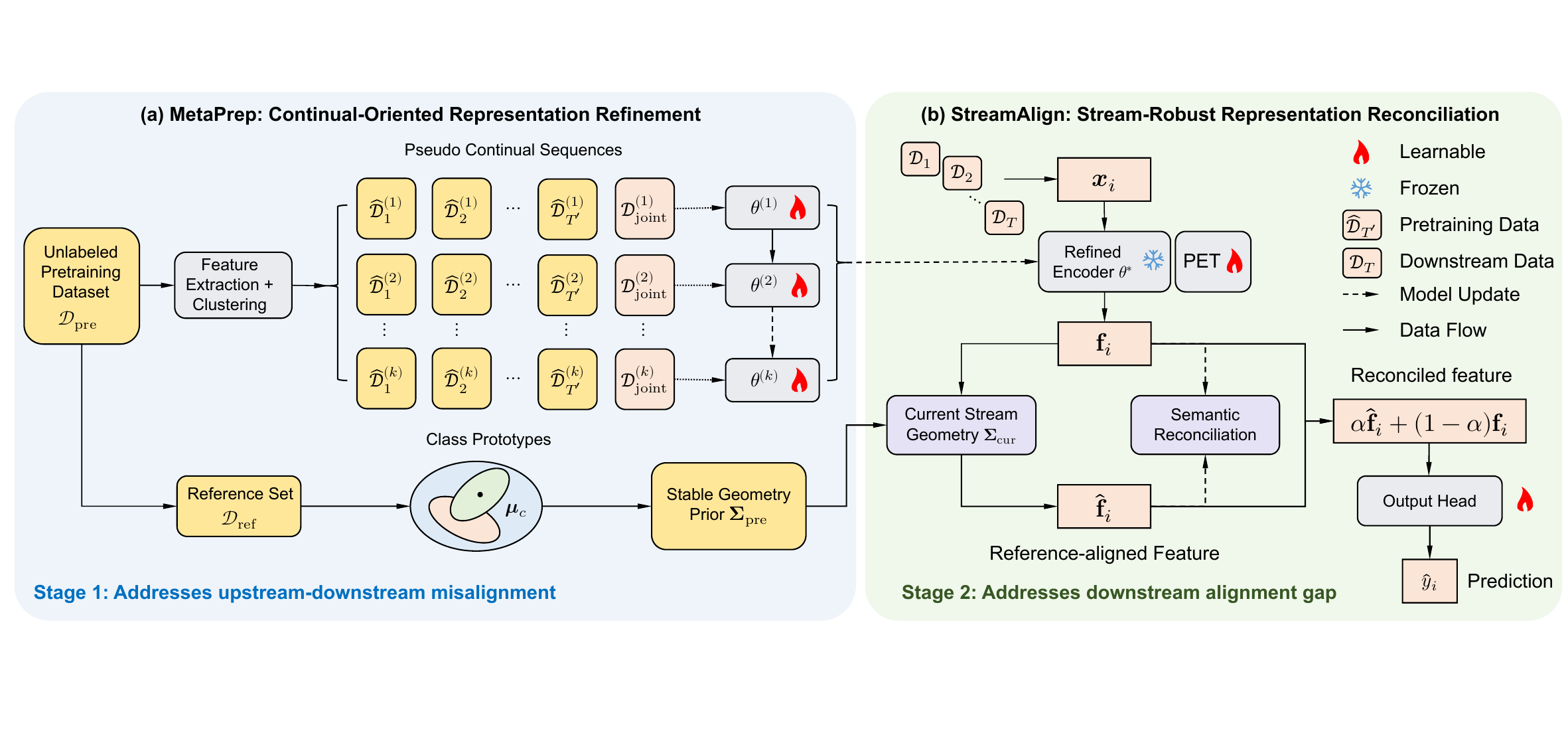}
    \caption{\textbf{Framework overview of MePo++.} MePo++ unifies representation refinement before deployment and representation reconciliation during continual adaptation. In Stage 1, MetaPrep constructs pseudo continual sequences from unlabeled pretraining data and performs bi-level meta-refinement to obtain a GCL-oriented encoder together with a stable representation geometry prior, addressing the upstream-downstream misalignment. In Stage 2, StreamAlign reconciles transient online representations with this stable prior through geometry-guided reconstruction and semantic regularization, maintaining reliable and discriminative representations under evolving GCL streams.}
    \label{fig:framework}
\end{figure*}

\subsection{Implications for Method Design}
\label{subsec:implications}

The findings imply two requirements for PTM-based GCL. First, pretrained representations should be prepared for continual adaptation so they can acquire emerging concepts while retaining transferable knowledge. Second, online alignment should rely on stable information rather than unreliable task-wise statistics from blurry streams. These requirements motivate a two-stage design: prepare representations before deployment, then align them online to retain stable geometry and clear class boundaries over time.

\section{MePo++: Unified Representation Refinement and Reconciliation for GCL}
\label{sec.method}

Motivated by \cref{subsec:implications}, MePo++ bridges upstream pretraining and downstream GCL in two stages: representation refinement prepares pretrained knowledge before deployment, and representation reconciliation preserves a stable, discriminative space during online learning. A theoretical interpretation of this lifecycle, including local update compatibility in MetaPrep and controlled geometric correction in StreamAlign, is provided in \cref{app:theory}. We next detail both stages under realistic, evolving stream conditions.

\subsection{Framework Overview}
\label{sec.framework}

MePo++ formulates GCL as a two-stage process of representation preparation and online stabilization. Before deployment, \textbf{MetaPrep} refines generic pretrained representations for rapid continual adaptation, thereby mitigating upstream-downstream misalignment. During deployment, \textbf{StreamAlign} regularizes transient online representations against a stable reference geometry, mitigating the downstream alignment gap while maintaining semantic discrimination. Together, the two stages promote pre-deployment plasticity and preserve stability throughout the evolving stream.

As illustrated in \cref{fig:framework}, let $\mathcal{D}_{\mathrm{pre}}=\{\mathbf{x}_i\}_{i=1}^{N}$ denote the unlabeled pretraining data and $f_{\boldsymbol{\theta}_0}:\mathcal{X}\rightarrow\mathbb{R}^{d}$ the original pretrained encoder. In Stage 1, MetaPrep organizes the latent semantic structure encoded by the PTM into pseudo continual sequences, converting static upstream data into evolving experiences without semantic annotations. Bi-level meta-refinement then optimizes the encoder for performance after simulated sequential updates, transforming $\boldsymbol{\theta}_0$ into a GCL-oriented initialization $\boldsymbol{\theta}^{\ast}$. MetaPrep further summarizes the refined representation space as a stable geometry prior $\boldsymbol{\Sigma}_{\mathrm{pre}}\in\mathbb{R}^{d\times d}$; thus, Stage 1 produces $(\boldsymbol{\theta}^{\ast},\boldsymbol{\Sigma}_{\mathrm{pre}})$ for downstream learning.

In Stage 2, $\boldsymbol{\theta}^{\ast}$ initializes the encoder for the GCL stream $\mathcal{D}=\{\mathcal{B}_t\}_{t=1}^{T}$. For each $\mathbf{x}_i\in\mathcal{B}_t$, the current encoder produces $\mathbf{f}_i=f_{\boldsymbol{\theta}_t}(\mathbf{x}_i)$, and the batch induces a transient geometry $\boldsymbol{\Sigma}_{\mathrm{cur}}$. Because sparse, mixed, and non-stationary observations can bias this local geometry, StreamAlign uses $\boldsymbol{\Sigma}_{\mathrm{pre}}$ to reconstruct a reference-aligned feature $\hat{\mathbf{f}}_i$ while retaining the semantics of the plastic online feature $\mathbf{f}_i$. Their reconciliation yields $\mathbf{f}_{\mathrm{trans},i}$ for prediction, preserving stream-specific adaptation within a stable representation structure. Operationally, MetaPrep derives $(\boldsymbol{\theta}^{\ast},\boldsymbol{\Sigma}_{\mathrm{pre}})$ before deployment (see \cref{sec.metaprep}), whereas StreamAlign maps $(\mathbf{f}_i,\boldsymbol{\Sigma}_{\mathrm{cur}},\boldsymbol{\Sigma}_{\mathrm{pre}})$ to $\mathbf{f}_{\mathrm{trans},i}$ online (see \cref{sec.streamalign}). This decomposition decouples pre-deployment preparation from online stabilization.

\subsection{MetaPrep: Continual-Oriented Representation Refinement}
\label{sec.metaprep}

The observations in \cref{subsec:continual_learnability} reveal an upstream--downstream misalignment: pretrained representations encode transferable knowledge but are not optimized for sparse, evolving observations. MetaPrep addresses this gap before deployment by converting unlabeled data into simulated continual experiences and optimizing the encoder after sequential adaptation. It (1) discovers latent structure and constructs pseudo sequences, (2) performs bi-level meta-refinement, and (3) extracts a stable geometric prior. As illustrated in \cref{fig:metaprep}, sequential adaptation on pseudo tasks is followed by joint evaluation; their parameter displacement updates the initialization toward parameters that remain effective after adaptation. MetaPrep outputs a GCL-oriented initialization $\boldsymbol{\theta}^{\ast}$ and prior $\boldsymbol{\Sigma}_{\mathrm{pre}}$ for downstream continual adaptation.

\begin{figure}
    \centering
    \includegraphics[width=0.7\linewidth,clip,trim=260 100 260 100]{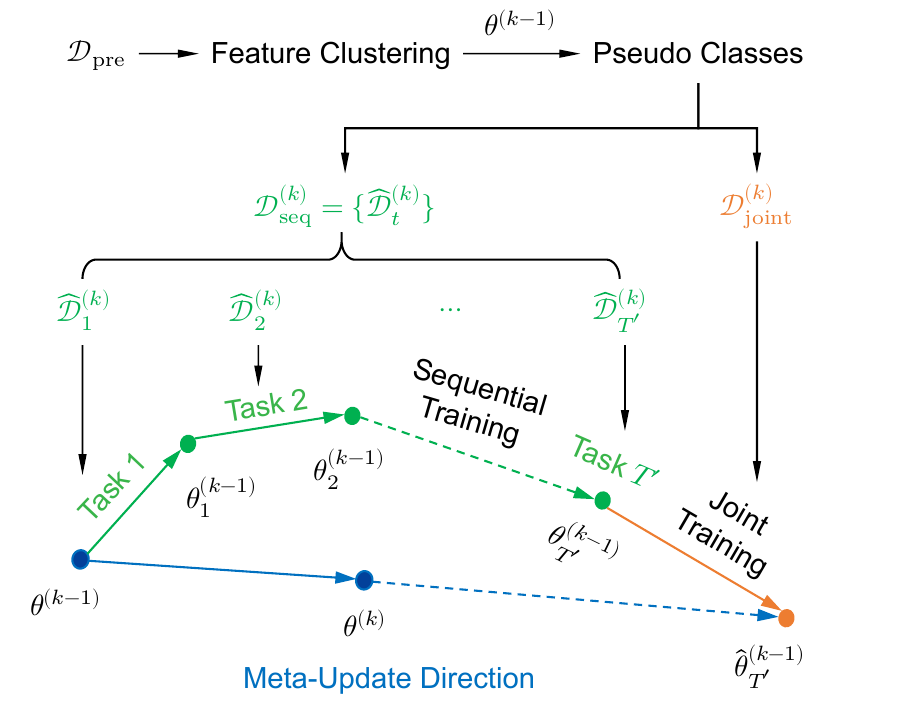}
    \caption{\textbf{Bi-level meta-refinement in MetaPrep.} At each meta-epoch, unlabeled upstream samples are clustered into pseudo classes and arranged as a pseudo continual sequence together with a held-out joint evaluation set. Sequential adaptation precedes joint refinement, and their parameter displacement defines the meta-update toward an initialization optimized for continual adaptation.}
    \label{fig:metaprep}
\end{figure}

\myPara{Unsupervised Continual Sequence Construction}
Let $\mathcal{D}_{\mathrm{pre}}=\{\mathbf{x}_i\}_{i=1}^{N}$ denote the unlabeled upstream dataset containing $N$ samples, and initialize the meta-refinement with $\boldsymbol{\theta}^{(0)}=\boldsymbol{\theta}_0$. At meta-epoch $k\in\{1,\ldots,K\}$, where $K$ denotes the total number of meta-epochs, we first extract the current representation of each sample as follows:
\begin{equation}
\mathbf{f}_i^{(k)}=f_{\boldsymbol{\theta}^{(k-1)}}(\mathbf{x}_i), \qquad i=1,\ldots,N,
\label{eq:metaprep_feature}
\end{equation}
where $\mathbf{f}_i^{(k)}\in\mathbb{R}^{d}$ is the $d$-dimensional feature produced by the encoder after the $(k-1)$-th meta-update. Since semantic annotations of upstream data may be unavailable, we recover the intrinsic structure already encoded in these features through $M$-way K-means clustering over the current feature set:
\begin{equation}
\{\mathbf{c}_m^{(k)}\}_{m=1}^{M}
=
\arg\min_{\{\mathbf{c}_m\}_{m=1}^{M}}
\sum_{i=1}^{N}
\min_{m\in\{1,\ldots,M\}}
\left\|
\mathbf{f}_i^{(k)}-\mathbf{c}_m
\right\|_2^2,
\label{eq:metaprep_cluster}
\end{equation}
where $M$ is the number of pseudo-classes and $\mathbf{c}_m^{(k)}\in\mathbb{R}^{d}$ denotes the centroid of the $m$-th cluster at meta-epoch $k$. Each sample is then assigned the pseudo-label
\begin{equation}
\tilde{y}_i^{(k)}
=
\arg\min_{m\in\{1,\ldots,M\}}
\left\|
\mathbf{f}_i^{(k)}-\mathbf{c}_m^{(k)}
\right\|_2^2.
\label{eq:metaprep_pseudolabel}
\end{equation}
The pseudo-label $\tilde{y}_i^{(k)}$ therefore reflects the latent semantic neighborhood of $\mathbf{x}_i$ in the current pretrained representation space rather than an external annotation. We reserve a held-out joint set $\mathcal{D}_{\mathrm{joint}}^{(k)}$ containing samples from the discovered pseudo-classes for post-sequence evaluation, while the remaining pseudo-labeled samples are organized into a sequential meta-training stream. Specifically, let $\{\mathcal{C}_{t}^{(k)}\}_{t=1}^{T'}$ denote $T'$ disjoint subsets of pseudo-class indices, where $\mathcal{C}_{t}^{(k)}\cap\mathcal{C}_{s}^{(k)}=\varnothing$ for $t\neq s$. The $t$-th pseudo continual task is defined by the following sample set for sequential adaptation:
\begin{equation}
\hat{\mathcal{D}}_{t}^{(k)}
=
\left\{
(\mathbf{x}_i,\tilde{y}_i^{(k)})
\,\middle|\,
\tilde{y}_i^{(k)}\in\mathcal{C}_{t}^{(k)}
\right\},
\qquad
t=1,\ldots,T',
\label{eq:metaprep_sequence}
\end{equation}
where $T'$ denotes the number of pseudo tasks within each simulated continual sequence. This construction converts static upstream observations into sequentially emerging pseudo concepts, allowing post-training to expose the encoder to the representation interference that will later arise in GCL. \cref{eq:metaprep_cluster,eq:metaprep_pseudolabel} are recomputed at every meta-epoch, so the simulated continual experiences evolve together with the representation rather than being fixed by predetermined semantic labels.

\myPara{Bi-level Meta-Refinement}
Given the pseudo continual sequence $\{\hat{\mathcal{D}}_{t}^{(k)}\}_{t=1}^{T'}$, we optimize the encoder according to its behavior after sequential adaptation rather than its performance on shuffled upstream samples. At meta-epoch $k$, we initialize the inner-loop encoder as $\boldsymbol{\theta}_{0}^{(k)}=\boldsymbol{\theta}^{(k-1)}$ and introduce an auxiliary prediction head $g_{\boldsymbol{\psi}}$ with initial parameters $\boldsymbol{\psi}_{0}^{(k)}$. For pseudo task $t$, its classification objective is defined as follows:
\begin{equation}
\mathcal{L}_{t}^{(k)}(\boldsymbol{\theta},\boldsymbol{\psi})
=
\mathbb{E}_{(\mathbf{x},\tilde{y})\sim\hat{\mathcal{D}}_{t}^{(k)}}
\left[
\mathcal{L}_{\mathrm{CE}}
\left(
g_{\boldsymbol{\psi}}
\left(
f_{\boldsymbol{\theta}}(\mathbf{x})
\right),
\tilde{y}
\right)
\right],
\label{eq:metaprep_task_loss}
\end{equation}
where $\mathcal{L}_{\mathrm{CE}}$ denotes the cross-entropy loss and $\tilde{y}$ is the pseudo-label obtained from \cref{eq:metaprep_pseudolabel}. The encoder and auxiliary head are then updated sequentially over the pseudo tasks:
\begin{align}
\boldsymbol{\theta}_{t}^{(k)}
&=
\boldsymbol{\theta}_{t-1}^{(k)}
-
\eta_{\theta}
\nabla_{\boldsymbol{\theta}}
\mathcal{L}_{t}^{(k)}
\left(
\boldsymbol{\theta}_{t-1}^{(k)},
\boldsymbol{\psi}_{t-1}^{(k)}
\right), \\
\boldsymbol{\psi}_{t}^{(k)}
&=
\boldsymbol{\psi}_{t-1}^{(k)}
-
\eta_{\psi}
\nabla_{\boldsymbol{\psi}}
\mathcal{L}_{t}^{(k)}
\left(
\boldsymbol{\theta}_{t-1}^{(k)},
\boldsymbol{\psi}_{t-1}^{(k)}
\right),
\label{eq:metaprep_inner}
\end{align}
where $\eta_{\theta}$ and $\eta_{\psi}$ denote the inner-loop learning rates for the encoder and prediction head, respectively. Unlike conventional post-training on i.i.d.\ batches, these ordered updates deliberately perturb the representation through successive pseudo concepts, thereby simulating the acquisition--interference dynamics encountered during downstream GCL. After processing all $T'$ pseudo tasks, we evaluate whether the sequentially adapted encoder $\boldsymbol{\theta}_{T'}^{(k)}$ still preserves knowledge across the full pseudo-semantic space using the held-out joint set for global evaluation over all discovered concepts:
\begin{equation}
\mathcal{L}_{\mathrm{joint}}^{(k)}
=
\mathbb{E}_{(\mathbf{x},\tilde{y})\sim\mathcal{D}_{\mathrm{joint}}^{(k)}}
\left[
\mathcal{L}_{\mathrm{CE}}
\left(
g_{\boldsymbol{\psi}_{T'}^{(k)}}
\left(
f_{\boldsymbol{\theta}_{T'}^{(k)}}(\mathbf{x})
\right),
\tilde{y}
\right)
\right].
\label{eq:metaprep_joint_loss}
\end{equation}
Here, $\mathcal{D}_{\mathrm{joint}}^{(k)}$ jointly covers the discovered pseudo concepts rather than only the latest task, so minimizing $\mathcal{L}_{\mathrm{joint}}^{(k)}$ explicitly evaluates the representation after sequential perturbation from a global perspective. We then perform one post-sequence refinement step on this joint objective over the shared pseudo-semantic space before the outer meta-update:
\begin{equation}
\hat{\boldsymbol{\theta}}_{T'}^{(k)}
=
\boldsymbol{\theta}_{T'}^{(k)}
-
\eta_{\theta}
\nabla_{\boldsymbol{\theta}}
\mathcal{L}_{\mathrm{joint}}^{(k)},
\label{eq:metaprep_outer}
\end{equation}
where $\hat{\boldsymbol{\theta}}_{T'}^{(k)}$ denotes the parameters that retain good joint performance after the simulated continual trajectory. Following the first-order meta-learning principle of Reptile~\cite{nichol2018reptile}, we then update the initialization toward this post-adaptation solution:
\begin{equation}
\boldsymbol{\theta}^{(k)}
=
\boldsymbol{\theta}^{(k-1)}
+
\eta_{\mathrm{meta}}
\left(
\hat{\boldsymbol{\theta}}_{T'}^{(k)}
-
\boldsymbol{\theta}^{(k-1)}
\right),
\label{eq:metaprep_meta}
\end{equation}
where $\eta_{\mathrm{meta}}$ is the meta learning rate controlling the magnitude of the outer update. This update does not merely favor parameters that perform well before adaptation; instead, it moves the pretrained initialization toward regions that remain effective after a sequence of continual updates. Repeating the above procedure for $K$ meta-epochs yields $\boldsymbol{\theta}^{\ast}=\boldsymbol{\theta}^{(K)}$,
where $\boldsymbol{\theta}^{\ast}$ denotes the refined encoder parameters used to initialize downstream GCL. In this way, the sequential inner loop supplies controlled continual perturbations, while the joint outer evaluation provides the meta-signal that favors post-adaptation generalization, jointly turning static post-training into continual-oriented representation refinement with improved adaptability for downstream continual learning.

\myPara{Stable Representation Prior}
MetaPrep produces not only the refined initialization $\boldsymbol{\theta}^{\ast}$ but also a stable summary of its global representation structure, which later serves as the interface to downstream reconciliation. We first extract the refined upstream features $f_{\boldsymbol{\theta}^{\ast}}(\mathbf{x}_i)$ and repeat the clustering procedure in \cref{eq:metaprep_cluster,eq:metaprep_pseudolabel} to obtain final pseudo-class assignments $\tilde{y}_i$. For pseudo-class $m\in\{1,\ldots,M\}$, let $\mathcal{I}_m=\{i\mid\tilde{y}_i=m\}$ denote the indices of its assigned samples and $N_m=|\mathcal{I}_m|$ the corresponding number of samples. We summarize each pseudo-class using its prototype
\begin{equation}
\boldsymbol{\mu}_{m}
=
\frac{1}{N_m}
\sum_{i\in\mathcal{I}_m}
f_{\boldsymbol{\theta}^{\ast}}(\mathbf{x}_i),
\qquad
m=1,\ldots,M,
\label{eq:metaprep_proto}
\end{equation}
where $\boldsymbol{\mu}_{m}\in\mathbb{R}^{d}$ represents the semantic center of pseudo-class $m$. Using prototypes rather than individual samples suppresses within-cluster variation and allows the prior to emphasize the global organization among the discovered semantic groups. We then compute the mean prototype $
\bar{\boldsymbol{\mu}}
=
\frac{1}{M}
\sum_{m=1}^{M}
\boldsymbol{\mu}_{m}$,
and characterize the between-prototype second-order geometry as
\begin{equation}
\boldsymbol{\Sigma}_{\mathrm{pre}}
=
\frac{1}{M-1}
\sum_{m=1}^{M}
\left(
\boldsymbol{\mu}_{m}-\bar{\boldsymbol{\mu}}
\right)
\left(
\boldsymbol{\mu}_{m}-\bar{\boldsymbol{\mu}}
\right)^{\top}
\in\mathbb{R}^{d\times d}.
\label{eq:metaprep_cov}
\end{equation}
Here, $\boldsymbol{\Sigma}_{\mathrm{pre}}$ captures the global second-order organization of the refined pseudo-semantic centers and is computed once before downstream adaptation. It therefore differs from statistics estimated from sparse and temporally mixed online batches: the former provides a fixed upstream reference, whereas the latter reflects transient downstream observations. MetaPrep can consequently be summarized by the interface
\begin{equation}
\operatorname{MetaPrep}
\left(
\boldsymbol{\theta}_{0},
\mathcal{D}_{\mathrm{pre}}
\right)
\longrightarrow
\left(
\boldsymbol{\theta}^{\ast},
\boldsymbol{\Sigma}_{\mathrm{pre}}
\right),
\label{eq:metaprep_interface}
\end{equation}
where $\boldsymbol{\theta}^{\ast}$ initializes the downstream continual learner and $\boldsymbol{\Sigma}_{\mathrm{pre}}$ provides the stable reference geometry consumed by StreamAlign during online adaptation, as detailed in \cref{sec.streamalign}.

\subsection{StreamAlign: Stream-Robust Representation Reconciliation}
\label{sec.streamalign}

The empirical observations in \cref{subsec:representation_alignment} reveal a downstream alignment gap: although output alignment can improve PTM-based GCL, its effectiveness becomes unreliable when the required statistics are estimated from sparse and blurry online streams. StreamAlign addresses this issue by using the stable upstream geometry produced by MetaPrep as an external reference and reconciling each transient online representation against this structure during continual adaptation. Specifically, StreamAlign consists of three steps: (1) reconstructing transient online features toward the stable reference geometry, (2) enforcing semantic consistency between the original and reconstructed representations, and (3) jointly optimizing prediction and reconciliation objectives throughout the online stream. As illustrated in \cref{fig:streamalign}, directly aligning transient representations toward the reference geometry can improve distributional balance but may simultaneously weaken semantic separability. StreamAlign therefore reconciles the original plastic representation with its reference-aligned counterpart, aiming to preserve both geometric stability and discriminative structure. Given the refined encoder $\boldsymbol{\theta}^{\ast}$ and representation prior $\boldsymbol{\Sigma}_{\mathrm{pre}}$ produced by MetaPrep, StreamAlign transforms each evolving online representation into a reference-aligned yet discriminative feature for downstream prediction.

\begin{figure}
    \centering
    \includegraphics[width=0.8\linewidth,clip,trim=150 20 150 20]{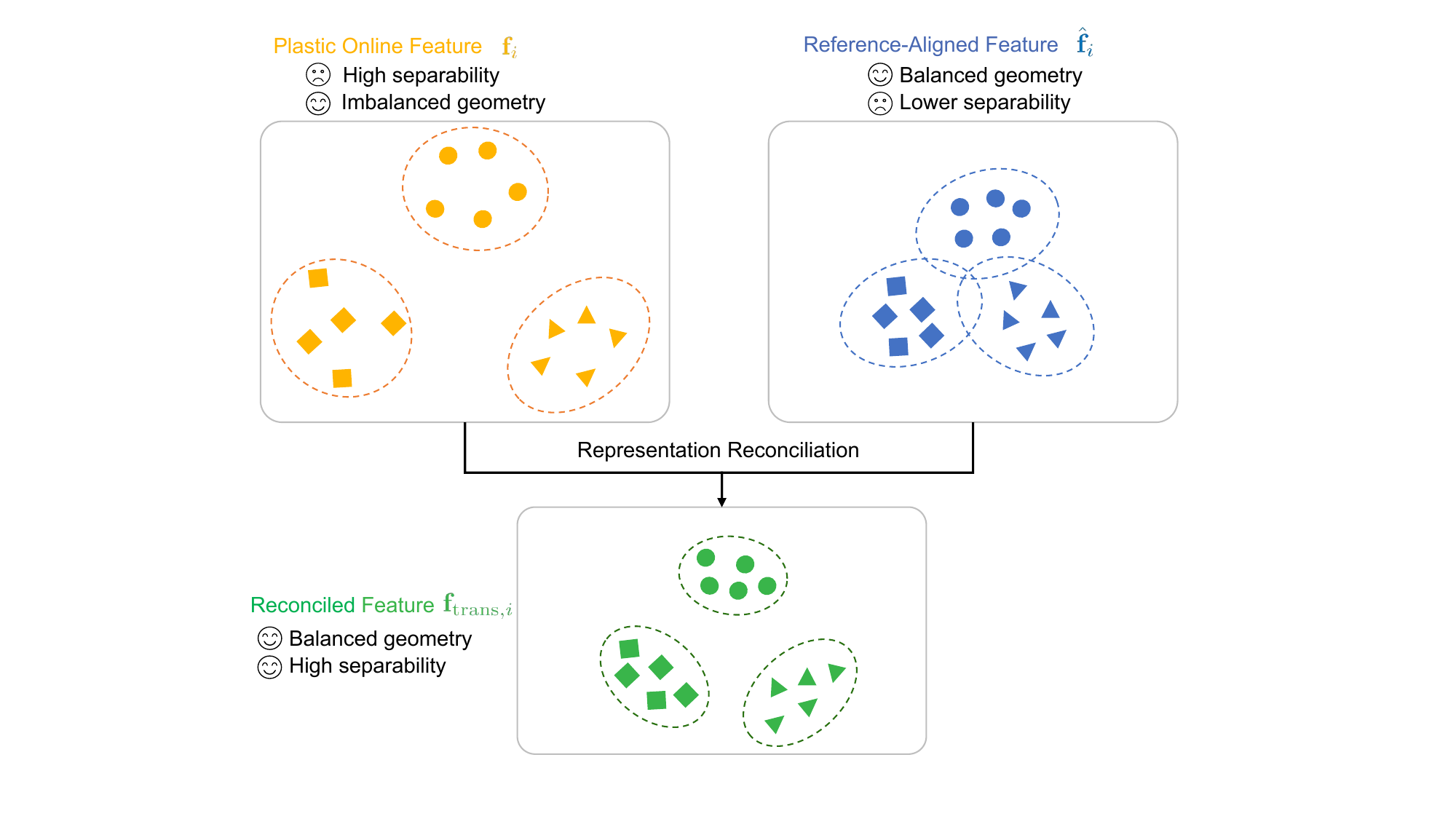}
    \caption{\textbf{Illustration of representation reconciliation in StreamAlign.} The transient pre-aligned representation $\mathbf{f}_i$ may preserve strong class separability but exhibit an imbalanced geometry, whereas direct geometric alignment produces a more balanced representation $\hat{\mathbf{f}}_i$ at the potential cost of reduced semantic separability. StreamAlign reconciles the two representations into $\mathbf{f}_{\mathrm{trans},i}$, preserving both geometric balance and discriminative structure for continual prediction.}
    \label{fig:streamalign}
\end{figure}

\myPara{Transient-to-Reference Reconstruction}
At learning step $t$, let $\mathcal{B}_{t}=\{(\mathbf{x}_{i},y_i)\}_{i=1}^{N_t}$ denote the current labeled mini-batch containing $N_t$ observations. Starting from the MetaPrep initialization $\boldsymbol{\theta}^{\ast}$, the downstream learner progressively updates the encoder to $\boldsymbol{\theta}_{t}$ and extracts the online feature for each incoming sample at every step as follows:
\begin{equation}
\mathbf{f}_{i}^{t}
=
f_{\boldsymbol{\theta}_{t}}(\mathbf{x}_{i})
\in\mathbb{R}^{d},
\qquad
i=1,\ldots,N_t,
\label{eq:stream_feature}
\end{equation}
where $\mathbf{f}_{i}^{t}$ reflects the representation after adaptation to the observations available up to step $t$. Because $\mathcal{B}_{t}$ is typically small and may contain temporally mixed concepts, its local feature distribution can substantially deviate from the globally organized representation space established before deployment. We characterize this transient geometry using the following batch mean for the current observations:
$\bar{\mathbf{f}}_{t}
=
\frac{1}{N_t}
\sum_{i=1}^{N_t}
\mathbf{f}_{i}^{t}$,
and the corresponding covariance
\begin{equation}
\boldsymbol{\Sigma}_{\mathrm{cur}}^{t}
=
\frac{1}{N_t-1}
\sum_{i=1}^{N_t}
\left(
\mathbf{f}_{i}^{t}-\bar{\mathbf{f}}_{t}
\right)
\left(
\mathbf{f}_{i}^{t}-\bar{\mathbf{f}}_{t}
\right)^{\top}
\in\mathbb{R}^{d\times d}.
\label{eq:stream_cov}
\end{equation}
Here, $\boldsymbol{\Sigma}_{\mathrm{cur}}^{t}$ summarizes the second-order geometry induced by the current online observations, whereas the fixed $\boldsymbol{\Sigma}_{\mathrm{pre}}$ obtained in \cref{eq:metaprep_cov} represents the stable reference geometry distilled from the refined upstream representation. To reduce the influence of transient online bias, we construct a linear transformation that maps the current covariance structure toward the reference structure. Let
\begin{equation}
\boldsymbol{\Sigma}_{\mathrm{cur}}^{t}
=
\boldsymbol{L}_{\mathrm{cur}}^{t}
\left(
\boldsymbol{L}_{\mathrm{cur}}^{t}
\right)^{\top},
\qquad
\boldsymbol{\Sigma}_{\mathrm{pre}}
=
\boldsymbol{L}_{\mathrm{pre}}
\boldsymbol{L}_{\mathrm{pre}}^{\top}
\label{eq:stream_cholesky}
\end{equation}
denote the Cholesky factorizations of the current and reference covariance matrices, where $\boldsymbol{L}_{\mathrm{cur}}^{t}$ and $\boldsymbol{L}_{\mathrm{pre}}$ are lower-triangular factors. The corresponding covariance transport is defined as
\begin{equation}
\boldsymbol{A}_{t}
=
\left(
\boldsymbol{L}_{\mathrm{cur}}^{t}
\right)^{-1}
\boldsymbol{L}_{\mathrm{pre}},
\label{eq:stream_transform}
\end{equation}
so that applying $\boldsymbol{A}_{t}$ moves the second-order structure of the current representation toward the upstream reference. We reconstruct each online feature for the current batch using the following linear transformation:
$
\hat{\mathbf{f}}_{i}^{t}
=
\mathbf{f}_{i}^{t}
\boldsymbol{A}_{t}
$,
where $\hat{\mathbf{f}}_{i}^{t}$ denotes the reference-aligned representation. Directly replacing $\mathbf{f}_{i}^{t}$ with $\hat{\mathbf{f}}_{i}^{t}$, however, may suppress useful changes learned from the current stream. We therefore retain both the plastic online representation and the stable reconstructed representation through
\begin{equation}
\mathbf{f}_{\mathrm{trans},i}^{t}
=
(1-\alpha)\mathbf{f}_{i}^{t}
+
\alpha\hat{\mathbf{f}}_{i}^{t},
\qquad
\alpha\in[0,1],
\label{eq:stream_interp}
\end{equation}
where $\alpha$ controls the strength of reference alignment. This preserves stream-specific adaptation through $\mathbf{f}_{i}^{t}$ while anchoring the resulting feature to the stable geometry encoded by $\hat{\mathbf{f}}_{i}^{t}$.

\myPara{Semantic Representation Reconciliation}
Covariance reconstruction constrains the global second-order geometry of online features, but geometric consistency alone does not guarantee that the resulting representation remains semantically discriminative. In particular, different classes may undergo similar geometric transformations, causing class-level relationships to become distorted even when the global covariance structure is moved closer to the reference geometry. We therefore treat the original and reconstructed representations as two complementary views of the same online sample: $\mathbf{f}_{i}^{t}$ retains plastic information acquired from the evolving stream, whereas $\hat{\mathbf{f}}_{i}^{t}$ reflects the stable upstream geometry. Before measuring semantic agreement, we normalize both views at each learning step and use the resulting unit features in the batch-level contrastive objective:
\begin{equation}
\mathbf{z}_{i}^{(1)}
=
\frac{\mathbf{f}_{i}^{t}}
{\|\mathbf{f}_{i}^{t}\|_2},
\qquad
\mathbf{z}_{i}^{(2)}
=
\frac{\hat{\mathbf{f}}_{i}^{t}}
{\|\hat{\mathbf{f}}_{i}^{t}\|_2},
\label{eq:stream_views}
\end{equation}
and collect the resulting $2N_t$ representations into the contrastive index set $\mathcal{V}_{t}$. For an anchor representation indexed by $a\in\mathcal{V}_{t}$, let
$
\mathcal{P}_{t}(a)
=
\left\{
p\in\mathcal{V}_{t}
\setminus\{a\}
\mid
y_p=y_a
\right\}
$
denote the set of representations that share the same semantic label as the anchor, where $y_a$ denotes the class label inherited from the corresponding online sample. For each anchor and its valid positive samples in the current batch, we define the pairwise contrastive term used for semantic reconciliation as follows:
\begin{equation}
\ell_{a,p}^{t}
=
-\log
\frac{\exp(\mathbf{z}_{a}^{\top}\mathbf{z}_{p}/\tau)}
{\sum_{q\in\mathcal{V}_{t}\setminus\{a\}}
\exp(\mathbf{z}_{a}^{\top}\mathbf{z}_{q}/\tau)},
\label{eq:stream_pair_con}
\end{equation}
where $\tau>0$ denotes the temperature parameter. The semantic reconciliation loss is then defined over all valid anchors in the current batch and their positive pairs as follows:
\begin{equation}
\mathcal{L}_{\mathrm{con}}^{t}
=
\frac{1}{|\mathcal{I}_{t}|}
\sum_{a\in\mathcal{I}_{t}}
\frac{1}{|\mathcal{P}_{t}(a)|}
\sum_{p\in\mathcal{P}_{t}(a)}
\ell_{a,p}^{t},
\label{eq:stream_con}
\end{equation}
where $\mathcal{I}_{t}=\{a\in\mathcal{V}_{t}\mid |\mathcal{P}_{t}(a)|>0\}$ denotes the set of valid anchors with at least one positive representation. This objective encourages representations from the same class to remain consistent across the plastic and reference-aligned spaces while explicitly separating representations from different classes. As a result, StreamAlign does not merely restore global covariance structure but reconciles the two representation states at the semantic level, preventing geometric stabilization from sacrificing discriminability in the continual learning process.

\begin{algorithm}[t]
\caption{The procedure of MetaPrep.}
\label{alg.metaprep}
\begin{algorithmic}[1]
\STATE \textbf{Input:} Unlabeled upstream data $\mathcal{D}_{\mathrm{pre}}$, pretrained encoder $\boldsymbol{\theta}^{(0)}$
\STATE \textbf{Hyperparameters:} Meta-epochs $K$, pseudo-classes $M$, pseudo-tasks $T'$, meta step size $\eta_{\mathrm{meta}}$
\FOR{$k=1$ to $K$}
    \STATE Extract features $f_{\boldsymbol{\theta}^{(k-1)}}(\mathcal{D}_{\mathrm{pre}})$ and cluster them into $M$ pseudo-classes
    \STATE Construct pseudo continual sequence $\{\hat{\mathcal{D}}_{t}^{(k)}\}_{t=1}^{T'}$ and held-out joint set $\mathcal{D}_{\mathrm{joint}}^{(k)}$
    \STATE Initialize $\boldsymbol{\theta}_{0}^{(k)} \gets \boldsymbol{\theta}^{(k-1)}$ and auxiliary head $\boldsymbol{\psi}_{0}^{(k)}$
    \FOR{$t=1$ to $T'$}
        \STATE $(\boldsymbol{\theta}_{t}^{(k)},\boldsymbol{\psi}_{t}^{(k)})
        \gets
        \operatorname{Update}
        (\boldsymbol{\theta}_{t-1}^{(k)},\boldsymbol{\psi}_{t-1}^{(k)};
        \hat{\mathcal{D}}_{t}^{(k)})$
    \ENDFOR
    \STATE $\hat{\boldsymbol{\theta}}_{T'}^{(k)}
    \gets
    \operatorname{Update}_{\boldsymbol{\theta}}
    (\boldsymbol{\theta}_{T'}^{(k)},\boldsymbol{\psi}_{T'}^{(k)};
    \mathcal{D}_{\mathrm{joint}}^{(k)})$
    \STATE $\boldsymbol{\theta}^{(k)}
    \gets
    \boldsymbol{\theta}^{(k-1)}
    +
    \eta_{\mathrm{meta}}
    \big(
    \hat{\boldsymbol{\theta}}_{T'}^{(k)}
    -
    \boldsymbol{\theta}^{(k-1)}
    \big)$
\ENDFOR
\STATE Set GCL-oriented initialization $\boldsymbol{\theta}^{\ast} \gets \boldsymbol{\theta}^{(K)}$
\STATE Re-cluster $f_{\boldsymbol{\theta}^{\ast}}(\mathcal{D}_{\mathrm{pre}})$ and compute pseudo-class prototypes $\{\boldsymbol{\mu}_{m}\}_{m=1}^{M}$
\STATE Compute stable geometry prior $\boldsymbol{\Sigma}_{\mathrm{pre}}$ from $\{\boldsymbol{\mu}_{m}\}_{m=1}^{M}$
\STATE \textbf{Return:} $\boldsymbol{\theta}^{\ast}$, $\boldsymbol{\Sigma}_{\mathrm{pre}}$
\end{algorithmic}
\end{algorithm}

\myPara{Online Learning Objective}
The reconciled feature $\mathbf{f}_{\mathrm{trans},i}^{t}$ is used for downstream prediction through the current classifier $g_{\boldsymbol{\phi}_{t}}$. The primary online classification objective is
\begin{equation}
\mathcal{L}_{\mathrm{main}}^{t}
=
\frac{1}{N_t}
\sum_{i=1}^{N_t}
\mathcal{L}_{\mathrm{CE}}
\left(
g_{\boldsymbol{\phi}_{t}}
\left(
\mathbf{f}_{\mathrm{trans},i}^{t}
\right),
y_i
\right),
\label{eq:stream_main}
\end{equation}
where $\boldsymbol{\phi}_{t}$ denotes the downstream prediction parameters at step $t$. We jointly optimize classification and semantic reconciliation using the following online objective:
\begin{equation}
\mathcal{L}_{\mathrm{GCL}}^{t}
=
\mathcal{L}_{\mathrm{main}}^{t}
+
\lambda
\mathcal{L}_{\mathrm{con}}^{t},
\label{eq:stream_total}
\end{equation}
where $\lambda\geq0$ controls the contribution of semantic reconciliation. The downstream parameters are updated according to
\begin{equation}
(\boldsymbol{\theta}_{t+1},\boldsymbol{\phi}_{t+1})
=
\mathcal{A}
\left(
\boldsymbol{\theta}_{t},
\boldsymbol{\phi}_{t},
\mathcal{B}_{t};
\mathcal{L}_{\mathrm{GCL}}^{t}
\right),
\label{eq:stream_update}
\end{equation}
where $\mathcal{A}$ denotes the underlying GCL learner introduced in \cref{subsec:problem_formulation}. This formulation keeps StreamAlign agnostic to a particular downstream continual-learning algorithm: $\mathcal{L}_{\mathrm{main}}^{t}$ preserves responsiveness to the incoming stream, while $\mathcal{L}_{\mathrm{con}}^{t}$ constrains the evolving representation to remain semantically compatible with the stable upstream geometry. StreamAlign can therefore be summarized as
\begin{equation}
\operatorname{StreamAlign}
\left(
\mathbf{f}_{i}^{t},
\boldsymbol{\Sigma}_{\mathrm{cur}}^{t},
\boldsymbol{\Sigma}_{\mathrm{pre}}
\right)
\longrightarrow
\mathbf{f}_{\mathrm{trans},i}^{t},
\label{eq:stream_interface}
\end{equation}
completing the second stage of MePo++ by reconciling plastic online adaptation with a stable and discriminative representation structure throughout the evolving data stream.

\begin{algorithm}[t]
\caption{The procedure of StreamAlign.}
\label{alg.streamalign}
\begin{algorithmic}[1]
\STATE \textbf{Input:} GCL stream $\mathcal{D}_{\mathrm{GCL}}$, refined encoder $\boldsymbol{\theta}^{\ast}$, \par geometry prior $\boldsymbol{\Sigma}_{\mathrm{pre}}$
\STATE \textbf{Hyperparameters:} Reconciliation weight $\alpha$, contrastive weight $\lambda$, temperature $\tau$
\STATE Initialize online encoder $\boldsymbol{\theta} \gets \boldsymbol{\theta}^{\ast}$ and prediction head $\boldsymbol{\psi}$
\FOR{each incoming batch $\mathcal{B}$ in $\mathcal{D}_{\mathrm{GCL}}$}
    \STATE Extract online representations $\{\mathbf{f}_{i}\}$ using $f_{\boldsymbol{\theta}}$ and estimate $\boldsymbol{\Sigma}_{\mathrm{cur}}$
    \STATE Compute geometry transform $\boldsymbol{A}$ from $\boldsymbol{\Sigma}_{\mathrm{cur}}$ to $\boldsymbol{\Sigma}_{\mathrm{pre}}$
    \STATE Reconstruct reference-aligned features $\hat{\mathbf{f}}_{i} \gets \mathbf{f}_{i}\boldsymbol{A}$
    \STATE Reconcile representations:
    $\mathbf{f}_{\mathrm{trans},i}
    \gets
    (1-\alpha)\mathbf{f}_{i}
    +
    \alpha\hat{\mathbf{f}}_{i}$
    \STATE Compute prediction loss $\mathcal{L}_{\mathrm{main}}$ on $\{\mathbf{f}_{\mathrm{trans},i}\}$
    \STATE Compute semantic reconciliation loss $\mathcal{L}_{\mathrm{con}}$ between $\{\mathbf{f}_{i}\}$ and $\{\hat{\mathbf{f}}_{i}\}$ with temperature $\tau$
    \STATE Update $(\boldsymbol{\theta},\boldsymbol{\psi})$ using
    $\mathcal{L}_{\mathrm{GCL}}
    =
    \mathcal{L}_{\mathrm{main}}
    +
    \lambda\mathcal{L}_{\mathrm{con}}$
\ENDFOR
\STATE \textbf{Return:} Adapted encoder $\boldsymbol{\theta}$ and prediction head $\boldsymbol{\psi}$
\end{algorithmic}
\end{algorithm}

\subsection{Overall Learning Procedure}

The complete MePo++ framework integrates MetaPrep and StreamAlign into a two-stage learning procedure. As summarized in \cref{alg.metaprep}, MetaPrep first refines the pretrained encoder on unlabeled upstream data and produces both a GCL-oriented initialization $\boldsymbol{\theta}^{\ast}$ and a stable geometry prior $\boldsymbol{\Sigma}_{\mathrm{pre}}$. These outputs are then transferred to downstream continual adaptation, where StreamAlign, detailed in \cref{alg.streamalign}, continuously reconciles transient online representations with the pretrained geometry while optimizing the prediction objective. In this way, the two algorithms operate sequentially but complementarily: MetaPrep prepares the representation before deployment, whereas StreamAlign maintains geometric stability and semantic discriminability as the downstream stream evolves across all learning stages.

\begin{table*}[t]
\centering
\caption{
Overall performance comparison of different methods under the GCL setting
on 5-task CIFAR-100, ImageNet-R, and CUB-200.
All results are averaged over five runs with different task sequences and
reported as mean $\pm$ standard deviation.
}
\label{tab:main}

\setlength{\tabcolsep}{4.2pt}
\resizebox{0.75\textwidth}{!}{%
\begin{tabular}{
    c
    l
    *{6}{
        S[
            table-format=2.2,
            table-space-text-post={\std{11.55}}
        ]
    }
}
\toprule
\multirow{2.5}{*}{\textbf{PTM}}
&
\multirow{2.5}{*}{\textbf{Method}}
&
\multicolumn{2}{c}{\textbf{CIFAR-100}}
&
\multicolumn{2}{c}{\textbf{ImageNet-R}}
&
\multicolumn{2}{c}{\textbf{CUB-200}}
\\
\cmidrule(lr){3-4}
\cmidrule(lr){5-6}
\cmidrule(lr){7-8}
&
&
{$A_{\mathrm{AUC}}~(\uparrow)$}
&
{$A_{\mathrm{AUC}}~(\uparrow)$}
&
{$A_{\mathrm{Last}}~(\uparrow)$}
&
{$A_{\mathrm{AUC}}~(\uparrow)$}
&
{$A_{\mathrm{Last}}~(\uparrow)$}
&
{$A_{\mathrm{Last}}~(\uparrow)$}
\\
\midrule

\multirow{16}{*}{\shortstack{Sup-21K}}
& Seq FT
& 19.71\std{3.39}
& 10.42\std{4.92}
& 7.51\std{3.94}
& 2.29\std{0.85}
& 3.47\std{0.41}
& 1.49\std{0.42}
\\

& Linear Probe
& 49.69\std{6.09}
& 23.07\std{7.33}
& 29.24\std{1.26}
& 16.87\std{3.14}
& 28.96\std{2.46}
& 17.33\std{3.08}
\\

& Seq FT (SL~\cite{zhang2023slca})
& 64.90\std{7.18}
& 62.06\std{1.89}
& 47.20\std{1.47}
& 39.60\std{2.43}
& 56.16\std{4.32}
& 56.50\std{3.08}
\\

& MVP~\cite{moon2023online}
& 68.13\std{4.34}
& 60.56\std{2.57}
& 41.50\std{1.15}
& 34.14\std{3.95}
& 56.78\std{2.88}
& 50.25\std{3.53}

\\

& CODA-P~\cite{smith2023coda}
& 78.81\std{3.38}
& 80.30\std{1.58}
& 50.11\std{2.14}
& 46.17\std{2.00}
& 64.96\std{3.30}
& 59.28\std{3.14}
\\

\addlinespace[3pt]

& L2P~\cite{wang2022learning}
& 78.12\std{0.61}
& 77.73\std{1.09}
& 42.39\std{0.23}
& 38.16\std{1.37}
& 60.95\std{1.22}
& 56.31\std{2.53}
\\

& \quad w/ MePo 
&  83.63\std{0.61}
&  83.98\std{0.29}
&  48.01\std{1.29}
&  45.89\std{0.60}
&  64.92\std{1.47}
&  63.30\std{1.52}
\\

& \quad w/ MePo++
& \bfseries86.04\std{1.12} 
& \bfseries84.88\std{0.34} 
& \bfseries50.02\std{1.30}
& \bfseries48.10\std{0.51}
& \bfseries67.05\std{1.58}
& \bfseries66.34\std{1.33}
\\

\addlinespace[3pt]

& DualPrompt~\cite{wang2022dualprompt}
& 66.36\std{4.42}
& 58.09\std{4.40}
& 38.63\std{2.19}
& 30.71\std{0.82}
& 55.73\std{2.77}
& 47.08\std{4.94}
\\

& \quad w/ MePo 
&  71.37\std{4.07}
&  66.48\std{2.82}
&  44.65\std{2.09}
&  36.76\std{1.21}
&  58.36\std{2.59}
&  52.16\std{3.74}
\\

& \quad w/ MePo++
& \bfseries72.99\std{4.11}
& \bfseries72.00\std{2.85}
& \bfseries47.34\std{1.63}
& \bfseries40.77\std{1.84}
& \bfseries59.05\std{2.98}
& \bfseries53.87\std{3.45}
\\

\addlinespace[3pt]

& MISA~\cite{kang2025advancing}
& 80.35\std{2.39}
& 80.75\std{1.24}
& 51.52\std{2.09}
& 45.08\std{1.43}
& 65.40\std{3.01}
& 60.20\std{1.82}
\\

& \quad w/ MePo 
&  82.30\std{2.83}
&  83.99\std{1.35}
&  54.86\std{2.20}
&  49.18\std{1.38}
&  68.13\std{3.17}
&  64.75\std{1.00}
\\

& \quad w/ MePo++
& \bfseries82.73\std{0.84}
& \bfseries84.01\std{0.50}
& \bfseries55.18\std{1.72}
& \bfseries50.20\std{1.43}
& \bfseries69.52\std{3.00}
& \bfseries68.63\std{1.39}
\\

\midrule

\multirow{12}{*}{\shortstack{Sup-21/1K}}

& L2P~\cite{wang2022learning}
& 69.15\std{1.66}
& 68.57\std{1.38}
& 42.74\std{0.83}
& 39.22\std{2.14}
& 39.20\std{1.69}
& 46.76\std{1.87}
\\

& \quad w/ MePo 
&  78.75\std{1.18}
&  77.52\std{1.03}
&  62.71\std{1.09}
&  58.91\std{0.08}
&  48.36\std{1.88}
&  50.88\std{2.85}
\\

& \quad w/ MePo++
& \bfseries79.33\std{0.61}
& \bfseries82.12\std{0.87}
& \bfseries64.12\std{1.21}
& \bfseries61.30\std{1.38}
& \bfseries48.83\std{1.49}
& \bfseries54.56\std{1.05}
\\

\addlinespace[3pt]

& DualPrompt~\cite{wang2022dualprompt}
& 64.84\std{2.62}
& 54.00\std{3.36}
& 48.61\std{0.76}
& 43.85\std{1.04}
& 44.08\std{2.80}
& 36.27\std{4.52}
\\

& \quad w/ MePo 
&  67.18\std{4.48}
&  57.95\std{3.69}
&  54.75\std{1.66}
&  44.75\std{0.74}
&  47.06\std{3.19}
&  38.24\std{9.29}
\\

& \quad w/ MePo++
& \bfseries67.52\std{4.46}
& \bfseries59.77\std{3.73}
& \bfseries55.99\std{1.22}
& \bfseries46.89\std{0.82}
& \bfseries47.48\std{2.92}
& \bfseries39.58\std{9.47}
\\

\addlinespace[3pt]

& MISA~\cite{kang2025advancing}
& 62.91\std{7.96}
& 67.99\std{7.41}
& 50.87\std{1.69}
& 47.75\std{2.87}
& 42.76\std{2.33}
& 44.05\std{1.94}
\\

& \quad w/ MePo 
&  78.73\std{2.43}
&  78.21\std{1.96}
&  64.23\std{1.30}
&  58.20\std{0.51}
&  55.31\std{4.52}
&  56.58\std{2.33}
\\

& \quad w/ MePo++
& \bfseries78.94\std{2.65}
& \bfseries78.35\std{2.06}
& \bfseries65.27\std{0.48}
& \bfseries60.10\std{0.77}
& \bfseries57.61\std{4.67}
& \bfseries58.85\std{5.80}
\\

\midrule

\multirow{12}{*}{\shortstack{DINOv2}}

& L2P~\cite{wang2022learning}
& 70.31\std{1.09}
& 73.61\std{0.93}
& 56.51\std{1.32}
& 53.94\std{1.83}
& 27.36\std{1.64}
& 46.34\std{2.97}
\\

& \quad w/ MePo 
& 73.53\std{0.75}
& 77.34\std{0.90}
& 66.25\std{0.64}
& 60.85\std{2.02}
& 31.86\std{1.29}
& 48.87\std{1.82}
\\

& \quad w/ MePo++
& \bfseries76.61\std{0.70}
& \bfseries79.62\std{0.87}
& \bfseries69.95\std{0.55}
& \bfseries66.44\std{1.76}
& \bfseries36.86\std{1.40}
& \bfseries51.73\std{1.90}
\\

\addlinespace[3pt]

& DualPrompt~\cite{wang2022dualprompt}
& 64.28\std{4.26}
& 54.86\std{2.64}
& 57.28\std{1.55}
& 46.84\std{2.53}
& 48.87\std{2.77}
& 42.86\std{5.31}
\\

& \quad w/ MePo 
& 65.38\std{4.11}
& 55.51\std{3.19}
& 58.51\std{1.64}
& 47.30\std{2.28}
& 50.41\std{3.11}
& 43.34\std{6.10}
\\

& \quad w/ MePo++
& \bfseries66.92\std{4.54}
& \bfseries57.53\std{2.98}
& \bfseries61.60\std{1.84}
& \bfseries49.77\std{2.83}
& \bfseries51.41\std{2.93}
& \bfseries44.18\std{5.93}
\\

\addlinespace[3pt]

& MISA~\cite{kang2025advancing}
& 70.73\std{3.59}
& 68.53\std{2.13}
& 63.70\std{1.46}
& 57.07\std{1.90}
& 52.71\std{3.52}
& 49.78\std{2.18}
\\

& \quad w/ MePo 
& 71.51\std{3.20}
& 69.08\std{1.58}
& 68.29\std{1.15}
& 60.73\std{1.07}
& 54.29\std{3.57}
& 49.45\std{2.07}
\\

& \quad w/ MePo++
& \bfseries74.78\std{3.69}
& \bfseries71.74\std{2.15}
& \bfseries70.88\std{0.99}
& \bfseries64.26\std{0.98}
& \bfseries57.44\std{3.10}
& \bfseries52.04\std{2.11}
\\

\bottomrule
\end{tabular}%
}

\end{table*}

\begin{table*}[t]
\centering
\caption{
Overall performance comparison of different methods under the few-shot GCL setting (\textbf{20\%} train dataset)
on 5-task CIFAR-100, ImageNet-R, and CUB-200.
All results are averaged over five runs with different task sequences and
reported as mean $\pm$ standard deviation for each configuration.
}
\label{tab:fewshot}

\setlength{\tabcolsep}{4.2pt}
\renewcommand{\arraystretch}{1.08}

\resizebox{0.75\textwidth}{!}{%
\begin{tabular}{
    c
    l
    *{6}{
        S[
            table-format=2.2,
            table-space-text-post={\std{11.55}}
        ]
    }
}
\toprule
\multirow{2.5}{*}{\textbf{PTM}}
&
\multirow{2.5}{*}{\textbf{Method}}
&
\multicolumn{2}{c}{\textbf{CIFAR-100}}
&
\multicolumn{2}{c}{\textbf{ImageNet-R}}
&
\multicolumn{2}{c}{\textbf{CUB-200}}
\\
\cmidrule(lr){3-4}
\cmidrule(lr){5-6}
\cmidrule(lr){7-8}
&
&
{$A_{\mathrm{AUC}}~(\uparrow)$}
&
{$A_{\mathrm{Last}}~(\uparrow)$}
&
{$A_{\mathrm{AUC}}~(\uparrow)$}
&
{$A_{\mathrm{Last}}~(\uparrow)$}
&
{$A_{\mathrm{AUC}}~(\uparrow)$}
&
{$A_{\mathrm{Last}}~(\uparrow)$}
\\
\midrule

\multirow{9}{*}{\shortstack{Sup-21K}}

& L2P~\cite{wang2022learning}
& 49.63\std{2.65}
& 52.09\std{2.49}
& 14.37\std{0.83}
& 12.82\std{0.76}
& 46.77\std{1.38}
& 44.89\std{1.35}
\\

& \quad w/ MePo
& 54.54\std{2.09}
& 58.94\std{0.94}
& 16.56\std{0.85}
& 15.06\std{1.08}
& 47.63\std{1.48}
& 47.74\std{1.70}
\\

& \quad w/ MePo++
& \bfseries58.16\std{2.23}
& \bfseries63.74\std{1.46}
& \bfseries21.72\std{0.69}
& \bfseries19.93\std{0.92}
& \bfseries50.56\std{1.37}
& \bfseries49.41\std{1.58}

\\

\addlinespace[3pt]

& DualPrompt~\cite{wang2022dualprompt}
& 52.13\std{3.42}
& 53.57\std{2.68}
& 23.35\std{0.93}
& 20.22\std{2.07}
& 48.86\std{3.69}
& 44.36\std{2.28}
\\

& \quad w/ MePo
& 54.02\std{4.76}
& 59.61\std{3.51}
& 22.31\std{1.26}
& 19.28\std{1.62}
& 48.82\std{2.75}
& 46.64\std{1.37}
\\

& \quad w/ MePo++
& \bfseries61.27\std{3.77}
& \bfseries69.59\std{2.89}
& \bfseries27.83\std{3.79}
& \bfseries24.99\std{2.29}
& \bfseries49.13\std{3.53}
& \bfseries50.25\std{2.12}

\\

\addlinespace[3pt]

& MISA~\cite{kang2025advancing}
& 61.32\std{2.40}
& 63.74\std{2.34}
& 26.03\std{1.79}
& 23.97\std{1.29}
& \bfseries55.77\std{2.55}
& 49.92\std{2.45}
\\

& \quad w/ MePo
& 59.11\std{2.50}
& 64.07\std{2.04}
& 24.77\std{1.85}
& 23.32\std{0.85}
& 53.99\std{2.04}
& 50.42\std{1.62}
\\

& \quad w/ MePo++
& \bfseries61.74\std{3.28}
& \bfseries65.23\std{1.88}
& \bfseries28.58\std{2.16}
& \bfseries26.68\std{1.03}
& 55.65\std{2.97}
& \bfseries52.72\std{1.10}

\\

\midrule

\multirow{9}{*}{\shortstack{Sup-21/1K}}
& L2P~\cite{wang2022learning}
& 33.70\std{1.58}
& 39.19\std{4.17}
& 13.06\std{1.34}
& 15.71\std{2.52}
& 15.95\std{1.34}
& 17.62\std{2.99}
\\

& \quad w/ MePo
& 48.42\std{3.86}
& 52.09\std{3.66}
& 32.51\std{1.66}
& 32.85\std{1.22}
& 23.42\std{1.51}
& 21.77\std{1.40}
\\

& \quad w/ MePo++
& \bfseries50.01\std{2.90}
& \bfseries53.84\std{2.84}
& \bfseries33.81\std{1.22}
& \bfseries34.06\std{0.92}
& \bfseries24.41\std{1.68}
& \bfseries22.70\std{1.48}
\\

\addlinespace[3pt]

& DualPrompt~\cite{wang2022dualprompt}
& 43.54\std{3.30}
& 47.29\std{3.70}
& 14.72\std{7.15}
& 8.24\std{3.17}
& 28.83\std{5.10}
& 26.51\std{4.04}
\\

& \quad w/ MePo
& 54.59\std{3.93}
& 55.08\std{2.39}
& 30.05\std{8.37}
& 27.79\std{3.06}
& 33.28\std{4.59}
& 30.86\std{2.01}
\\

& \quad w/ MePo++
& \bfseries56.03\std{3.35}
& \bfseries58.82\std{2.59}
& \bfseries39.62\std{5.08}
& \bfseries30.87\std{2.80}
& \bfseries34.08\std{6.05}
& \bfseries31.20\std{2.16}
\\

\addlinespace[3pt]

& MISA~\cite{kang2025advancing}
& 44.68\std{3.46}
& 45.34\std{2.14}
& 27.95\std{1.76}
& 24.20\std{0.65}
& 30.21\std{5.96}
& 21.01\std{2.05}
\\

& \quad w/ MePo
& 59.31\std{3.05}
& 58.19\std{1.70}
& 43.19\std{1.91}
& 40.19\std{1.35}
& 37.93\std{6.70}
& 28.33\std{2.52}
\\

& \quad w/ MePo++
& \bfseries60.24\std{5.04}
& \bfseries60.00\std{2.41}
& \bfseries45.64\std{1.58}
& \bfseries43.07\std{1.15}
& \bfseries40.40\std{6.20}
& \bfseries31.52\std{3.12}
\\

\midrule

\multirow{9}{*}{\shortstack{DINOv2}}
& L2P~\cite{wang2022learning}
& 23.70\std{2.45}
& 32.24\std{4.16}
& 11.18\std{1.07}
& 13.48\std{1.23}
& 5.95\std{0.14}
& 6.40\std{1.01}
\\

& \quad w/ MePo 
& 28.29\std{2.22}
& 39.83\std{4.11}
& 16.09\std{1.22}
& 18.07\std{1.13}
& 7.33\std{0.91}
& 7.19\std{0.37}
\\

& \quad w/ MePo++
& \bfseries31.67\std{2.46}
& \bfseries43.24\std{3.37}
& \bfseries21.61\std{3.07}
& \bfseries24.38\std{0.83}
& \bfseries8.82\std{0.90}
& \bfseries9.26\std{0.81}
\\

\addlinespace[3pt]

& DualPrompt~\cite{wang2022dualprompt}
& 52.68\std{1.45}
& 51.96\std{1.62}
& 20.26\std{13.59}
& 7.33\std{9.30}
& 13.15\std{6.75}
& 1.62\std{2.16}
\\

& \quad w/ MePo 
& 54.23\std{1.74}
& 54.50\std{1.51}
& 25.54\std{16.10}
& 9.39\std{18.54}
& 13.26\std{9.82}
& 5.90\std{9.08}
\\

& \quad w/ MePo++
& \bfseries55.61\std{1.72}
& \bfseries55.60\std{1.00}
& \bfseries29.09\std{10.83}
& \bfseries13.35\std{11.80}
& \bfseries17.19\std{13.52}
& \bfseries7.18\std{11.62}
\\

\addlinespace[3pt]

& MISA~\cite{kang2025advancing}
& 50.82\std{4.47}
& 48.80\std{5.10}
& 44.83\std{3.23}
& 35.80\std{0.47}
& 45.94\std{6.87}
& 37.35\std{1.88}
\\

& \quad w/ MePo 
& 55.77\std{3.95}
& 52.94\std{4.13}
& 49.71\std{3.18}
& 40.71\std{0.96}
& 50.84\std{7.89}
& 41.03\std{2.06}
\\

& \quad w/ MePo++
& \bfseries57.42\std{4.37}
& \bfseries54.98\std{2.70}
& \bfseries53.35\std{2.66}
& \bfseries46.06\std{1.28}
& \bfseries53.40\std{7.04}
& \bfseries44.27\std{2.24}
\\

\bottomrule
\end{tabular}%
}

\end{table*}

\begin{table}[t]
\centering
\caption{
Performance comparison of prompt-based learners with a CLIP backbone under the Si-Blurry GCL protocol on CIFAR-100 and ImageNet-R.
Results are averaged over five random seeds and reported as mean $\pm$ standard deviation.
Higher values are better for both metrics across all reported datasets.
}
\label{tab:vlm}

\setlength{\tabcolsep}{3.2pt}
\renewcommand{\arraystretch}{1.08}
\resizebox{\linewidth}{!}{%
\begin{tabular}{
    l
    *{4}{
        S[
            table-format=2.2,
            table-space-text-post={\std{2.60}}
        ]
    }
}
\toprule
\multirow{2.5}{*}{\textbf{Method}}
&
\multicolumn{2}{c}{\textbf{CIFAR-100}}
&
\multicolumn{2}{c}{\textbf{ImageNet-R}}
\\
\cmidrule(lr){2-3}
\cmidrule(lr){4-5}
&
{$A_{\mathrm{AUC}}~(\uparrow)$}
&
{$A_{\mathrm{Last}}~(\uparrow)$}
&
{$A_{\mathrm{AUC}}~(\uparrow)$}
&
{$A_{\mathrm{Last}}~(\uparrow)$}
\\
\midrule

L2P~\cite{wang2022learning}
& 76.34\std{1.58}
& 68.23\std{2.50}
& 76.51\std{1.12}
& 71.14\std{1.43}
\\

\quad w/ MePo++
& \bfseries 77.18\std{1.50}
& \bfseries 69.53\std{2.70}
& \bfseries 77.24\std{1.28}
& \bfseries 71.68\std{1.24}
\\

\addlinespace[3pt]
DualPrompt~\cite{wang2022dualprompt}
& 71.19\std{1.01}
& 65.83\std{0.19}
& 73.37\std{0.77}
& 66.56\std{0.11}
\\

\quad w/ MePo++
& \bfseries 74.27\std{1.07}
& \bfseries 68.72\std{0.17}
& \bfseries 76.29\std{0.79}
& \bfseries 69.90\std{0.18}
\\

\bottomrule
\end{tabular}%
}
\end{table}

\begin{table}[t]
    \centering
    \caption{
        Computational efficiency comparison on CUB-200 using the Sup-21K pretrained backbone.
    }
    \label{tab:complexity}

    \setlength{\tabcolsep}{3.2pt}
    \renewcommand{\arraystretch}{1.08}

    \resizebox{\linewidth}{!}{%
    \begin{tabular}{
        l
        r
        c
        c
        c
        r
        c
        c
        c
    }
        \toprule

        \multirow{2.5}{*}{\textbf{Setting}} & \multicolumn{4}{c}{\textbf{DualPrompt}} & \multicolumn{4}{c}{\textbf{MISA}} \\

        \cmidrule(lr){2-5}
        \cmidrule(lr){6-9}

        & {{+Param.}} & {+Ratio} & {{Time (s)}} & {{Accuracy}}
        & {{+Param.}} & {+Ratio} & {{Time (s)}} & {{Accuracy}} \\

        \midrule

        Baseline & {637k} & 0.74\% & 4.87 & 55.73 & {653k} & 0.76\% & 4.84 & 65.40 \\

        ~~w/ MePo & {1213k} & 1.41\% & 4.99 & 58.36 & {1229k} & 1.43\% & 5.01 & 68.13 \\

        ~~w/ MePo++ & {1213k} & 1.41\% & 5.09 & \bfseries 59.05 & {1229k} & 1.43\% & 5.05 & \bfseries 69.52 \\

        \bottomrule
    \end{tabular}%
    }
\end{table}

\section{Experiments}

We evaluate MePo++ across standard, few-shot, and CLIP-based continual-learning settings, followed by ablation and representation analyses to examine its effectiveness and underlying mechanisms under evolving data streams.

\subsection{Experimental Setting}
\label{sec:exp_setup}

\myPara{Datasets and Pretrained Backbones}
We evaluate MePo++ under the Si-Blurry GCL protocol on three representative image classification benchmarks with different scales and recognition characteristics: CIFAR-100~\cite{krizhevsky2009learning_cifar}, a 100-class general recognition dataset with low-resolution images; ImageNet-R~\cite{hendrycks2021many}, a 200-class large-scale benchmark containing diverse visual renditions; and CUB-200~\cite{wah2011caltech}, a 200-class fine-grained bird recognition dataset. Following the official Si-Blurry setting~\cite{moon2023online,kang2025advancing}, we set the disjoint class ratio to $m=50\%$ and the blurry sample ratio to $n=10\%$, and organize all classes into five learning phases. To examine robustness across different pretrained representations, we further consider ViT-B/16 backbones initialized with the Sup-21K~\cite{dosovitskiy2020image} and Sup-21K/1K~\cite{ridnik2021imagenet} checkpoints, together with a ViT-B/14 backbone initialized with DINOv2~\cite{oquab2024dinov2} for comprehensive evaluation.

\myPara{Metrics}
We evaluate continual performance using \emph{Final Average Accuracy} ($A_{\mathrm{Last}}$) and \emph{Average Anytime Accuracy} ($A_{\mathrm{AUC}}$). Let $a_{t,c}$ denote the test accuracy on class $c$ after evaluation step $t$, and let $\mathcal{C}_{t}$ denote the set of classes observed up to that step. The two metrics are defined as
\begin{align}
\label{eq:metric}
A_{\mathrm{Last}}
&=
\frac{1}{|\mathcal{C}_{T}|}
\sum_{c\in\mathcal{C}_{T}} a_{T,c},
\\
A_{\mathrm{AUC}}
&=
\frac{1}{T}
\sum_{t=1}^{T}
\frac{1}{|\mathcal{C}_{t}|}
\sum_{c\in\mathcal{C}_{t}} a_{t,c}.
\end{align}
$A_{\mathrm{Last}}$ evaluates the final performance after completing the continual stream, while $A_{\mathrm{AUC}}$ measures the average performance throughout the entire learning process and therefore reflects anytime continual performance. Higher values indicate better performance for both evaluation metrics.

\myPara{Baselines}
We consider four groups of baselines that differ in their use of pretrained knowledge and specialization for GCL. The first group comprises standard adaptation strategies: sequential full-network fine-tuning (Seq FT), linear probing with a frozen backbone, and slow-learner fine-tuning (Seq FT (SL))~\cite{zhang2023slca}. The second group includes representative PTM-based CL methods, namely L2P~\cite{wang2022learning}, DualPrompt~\cite{wang2022dualprompt}, and CODA-P~\cite{smith2023coda}. Following CODA-P, we implement L2P with prefix tuning (Deep L2P) to ensure architectural consistency. The third group consists of MVP~\cite{moon2023online} and MISA~\cite{kang2025advancing}, which are designed specifically for blurry GCL streams. Finally, we include the preliminary MePo~\cite{sun2026mepo} to directly quantify the improvements introduced by MePo++. For controlled comparisons, all PTM-based methods use length-5 prefixes in the first five transformer layers and are evaluated with identical pretrained checkpoints and downstream learners. %

\myPara{Implementation Details}
We follow the official MVP and MISA implementations~\cite{moon2023online,kang2025advancing}. MetaPrep uses 100 ImageNet-1K categories with 400 unlabeled images each; pseudo supervision comes only from feature clustering. Data are split into sequential meta-training and held-out joint subsets with $\gamma=0.3$. We optimize the encoder and auxiliary head with SGD ($\eta_{\theta}=1\times10^{-4}$, $\eta_{\psi}=1\times10^{-2}$, batch size 256) for 150, 100, and 150 epochs on Sup-21K, Sup-21K/1K, and DINOv2, respectively. The refined encoder yields pseudo-class prototypes and a fixed prior $\boldsymbol{\Sigma}_{\mathrm{pre}}$. StreamAlign estimates $\boldsymbol{\Sigma}_{\mathrm{cur}}^{t}$ per batch and adds $\epsilon\mathbf{I}$ ($\epsilon=1\times10^{-4}$) before Cholesky decomposition when needed. We use $\alpha=0.3$, $\lambda=0.002$, and $\tau=0.05$ by default. Downstream models use single-pass Adam (learning rate $5\times10^{-3}$, batch size 64). Experiments run on one RTX 3090 GPU and an AMD EPYC 7402 CPU at 2.8\,GHz; hyperparameter sensitivity appears in \cref{fig:param}.

\subsection{Comparisons with the State-of-the-Art}

We compare MePo++ with state-of-the-art methods under standard, data-scarce, and CLIP-based continual learning, and evaluate computational overhead under matched implementations and consistent evaluation protocols.

\myPara{Standard GCL Performance Comparison}
Under the standard GCL setting with full training data, \cref{tab:main} shows that MePo++ consistently improves different downstream learners across three datasets and multiple pretrained checkpoints. With Sup-21K, for example, MePo++ improves L2P by $7.92/7.15$ percentage points on CIFAR-100 and $7.63/9.94$ percentage points on ImageNet-R in terms of $A_{\mathrm{AUC}}/A_{\mathrm{Last}}$, corresponding to relative gains of $10.14\%/9.20\%$ and $18.00\%/26.05\%$, respectively. Similar improvements are observed for DualPrompt and MISA, and remain consistent under Sup-21/1K and DINOv2. Compared with our previous MePo~\cite{sun2026mepo}, MePo++ further improves most configurations, demonstrating the benefit of jointly refining pretrained representations before deployment and reconciling them throughout continual adaptation under evolving data streams and pretrained backbones.

\myPara{Few-Shot GCL Performance Comparison}
We also evaluate few-shot GCL with only $20\%$ of the training data. As shown in \cref{tab:fewshot}, MePo++ retains clear advantages under scarce observations. With Sup-21K, it improves DualPrompt by $9.14/16.02$ points on CIFAR-100 and $4.48/4.77$ points on ImageNet-R in $A_{\mathrm{AUC}}/A_{\mathrm{Last}}$, corresponding to relative gains of $17.53\%/29.90\%$ and $19.19\%/23.59\%$. Consistent gains across checkpoints and learners show that MePo++ exploits pretrained knowledge with limited supervision.

\myPara{CLIP-based Continual Learning Performance Comparison}
To examine whether the same representation-level strategy transfers beyond vision-only pretrained encoders, we further evaluate MePo++ on a CLIP backbone under the same Si-Blurry protocol, using representative prompt-based learners as downstream baselines. As shown in \cref{tab:vlm}, integrating MePo++ consistently improves both evaluated prompt-based learners on CIFAR-100, with gains in $A_{\mathrm{AUC}}/A_{\mathrm{Last}}$ of $0.84/1.30$ and $3.08/2.89$ percentage points for L2P and DualPrompt, respectively. The same trend holds on ImageNet-R, where MePo++ improves L2P by $0.73/0.54$ points and DualPrompt by $2.92/3.34$ points. These consistent paired gains indicate that the proposed representation refinement and reconciliation strategy remains beneficial with vision-language pretraining and transfers across downstream prompt learners with different adaptation mechanisms.

\myPara{Computational Performance Comparison}
\cref{tab:complexity} evaluates MePo++ overhead on CUB-200 with Sup-21K. With DualPrompt, learnable parameters rise from $637$K ($0.74\%$) to $1{,}213$K ($1.41\%$), and batch time from $4.87$\,s to $5.09$\,s, while accuracy improves from $55.73$ to $59.05$. With MISA, the parameter ratio changes from $0.76\%$ to $1.43\%$ and time from $4.84$\,s to $5.05$\,s, while accuracy rises from $65.40$ to $69.52$. MePo++ adds no learnable parameters over MePo and incurs marginal runtime, yet improves accuracy by $0.69$ and $1.39$ points for DualPrompt and MISA.

\begin{table}[t]
\centering
\caption{
Ablation study of the core modules using Sup-21K.
}
\label{tab:ablation}
\setlength{\tabcolsep}{1.4pt}
\subcaption{CIFAR-100}
\label{tab:ablation-a}
\resizebox{\linewidth}{!}{%
\begin{tabular}{
    cc
    *{4}{
        S[
            table-format=2.2,
            table-space-text-post={\std{11.55}}
        ]
    }
}
    \toprule
        \multirow{2.5}{*}{\textbf{MetaPrep}}
        &
        \multirow{2.5}{*}{\textbf{StreamAlign}}
        &
        \multicolumn{2}{c}{\textbf{DualPrompt}}
        &
        \multicolumn{2}{c}{\textbf{MISA}}
        \\
        \cmidrule(lr){3-4}
        \cmidrule(lr){5-6}
        &
        &
        {$A_{\mathrm{AUC}}~(\uparrow)$}
        &
        {$A_{\mathrm{Last}}~(\uparrow)$}
        &
        {$A_{\mathrm{AUC}}~(\uparrow)$}
        &
        {$A_{\mathrm{Last}}~(\uparrow)$}
        \\
        \midrule
        \xmark & \xmark
        & 66.36\std{4.42}
        & 58.09\std{4.40}
        & 80.35\std{2.39}
        & 80.75\std{1.24}
        \\

        \cmark & \xmark
        & 70.65\std{3.25}
        & 65.27\std{4.15}
        & 81.29\std{2.28}
        & 81.84\std{1.00}
        \\

        \xmark & \cmark
        & 69.10\std{5.22}
        & 65.76\std{5.59}
        & 81.36\std{2.46}
        & 82.19\std{0.78}
        \\

        \cmark & \cmark
        & \bfseries 72.99\std{4.11}
        & \bfseries 72.00\std{2.85}
        & \bfseries 82.73\std{0.84}
        & \bfseries 84.01\std{0.50}
        \\
        \bottomrule
\end{tabular}%
}
\vspace{0.2mm}
\subcaption{ImageNet-R}
\label{tab:ablation-b}
\resizebox{\linewidth}{!}{%
\begin{tabular}{
    cc
    *{4}{
        S[
            table-format=2.2,
            table-space-text-post={\std{11.55}}
        ]
    }
}
    \toprule
        \multirow{2.5}{*}{\textbf{MetaPrep}}
        &
        \multirow{2.5}{*}{\textbf{StreamAlign}}
        &
        \multicolumn{2}{c}{\textbf{DualPrompt}}
        &
        \multicolumn{2}{c}{\textbf{MISA}}
        \\
        \cmidrule(lr){3-4}
        \cmidrule(lr){5-6}
        &
        &
        {$A_{\mathrm{AUC}}~(\uparrow)$}
        &
        {$A_{\mathrm{Last}}~(\uparrow)$}
        &
        {$A_{\mathrm{AUC}}~(\uparrow)$}
        &
        {$A_{\mathrm{Last}}~(\uparrow)$}
        \\
        \midrule
        \xmark & \xmark
        & 38.63\std{2.19}
        & 30.71\std{0.82}
        & 51.52\std{2.09}
        & 45.08\std{1.43}
        \\

        \cmark & \xmark
        & 43.14\std{2.18}
        & 34.93\std{1.50}
        & 52.63\std{1.95}
        & 46.03\std{1.21}
        \\

        \xmark & \cmark
        & 42.47\std{1.75}
        & 36.92\std{1.02}
        & 52.61\std{1.74}
        & 48.01\std{2.05}
        \\

        \cmark & \cmark
        & \bfseries 47.34\std{1.63}
        & \bfseries 40.77\std{1.84}
        & \bfseries 55.18\std{1.72}
        & \bfseries 50.20\std{1.43}
        \\
        \bottomrule
\end{tabular}%
}
\vspace{0.2mm}
\subcaption{CUB-200}
\label{tab:ablation-c}
\resizebox{\linewidth}{!}{%
\begin{tabular}{
    cc
    *{4}{
        S[
            table-format=2.2,
            table-space-text-post={\std{11.55}}
        ]
    }
}
    \toprule
        \multirow{2.5}{*}{\textbf{MetaPrep}}
        &
        \multirow{2.5}{*}{\textbf{StreamAlign}}
        &
        \multicolumn{2}{c}{\textbf{DualPrompt}}
        &
        \multicolumn{2}{c}{\textbf{MISA}}
        \\
        \cmidrule(lr){3-4}
        \cmidrule(lr){5-6}
        &
        &
        {$A_{\mathrm{AUC}}~(\uparrow)$}
        &
        {$A_{\mathrm{Last}}~(\uparrow)$}
        &
        {$A_{\mathrm{AUC}}~(\uparrow)$}
        &
        {$A_{\mathrm{Last}}~(\uparrow)$}
        \\
        \midrule
        \xmark & \xmark
        & 55.73\std{2.77}
        & 47.08\std{4.94}
        & 65.40\std{3.01}
        & 60.20\std{1.82}
        \\

        \cmark & \xmark
        & 56.67\std{2.66}
        & 48.56\std{5.07}
        & 66.39\std{2.97}
        & 60.83\std{1.57}
        \\

        \xmark & \cmark
        & 57.51\std{2.86}
        & 52.88\std{2.67}
        & 69.04\std{3.14}
        & 67.60\std{0.60}
        \\

        \cmark & \cmark
        & \bfseries 59.05\std{2.98}
        & \bfseries 53.87\std{3.45}
        & \bfseries 69.52\std{3.00}
        & \bfseries 68.63\std{1.39}
        \\
        \bottomrule
\end{tabular}%
}
\end{table}

\begin{table}[t]
    \centering
    \caption{
        Effect of different representation preparation strategies on ImageNet-R using the Sup-21K pretrained backbone with DualPrompt and MISA.
    }
    \label{tab:post_ablation}
    \setlength{\tabcolsep}{3.2pt}
    \renewcommand{\arraystretch}{1.08}

    \resizebox{\linewidth}{!}{%
    \begin{tabular}{
        l
        *{4}{
            S[
                table-format=2.2,
                table-space-text-post={\std{11.55}}
            ]
        }
    }
        \toprule
        \multirow{2.5}{*}{\textbf{Post-training Strategy}}
        &
        \multicolumn{2}{c}{\textbf{DualPrompt}}
        &
        \multicolumn{2}{c}{\textbf{MISA}}
        \\
        \cmidrule(lr){2-3}
        \cmidrule(lr){4-5}
        &
        {$A_{\mathrm{AUC}}~(\uparrow)$}
        &
        {$A_{\mathrm{Last}}~(\uparrow)$}
        &
        {$A_{\mathrm{AUC}}~(\uparrow)$}
        &
        {$A_{\mathrm{Last}}~(\uparrow)$}
        \\
        \midrule

        w/o Post-training
        & 42.47\std{1.75}
        & 36.92\std{1.02}
        & 52.61\std{1.74}
        & 48.01\std{2.05}
        \\
        
        w/ Post-training (OML~\cite{javed2019meta})
        & 46.86\std{2.14}
        & 39.11\std{1.93}
        & 54.54\std{2.67}
        & 49.29\std{0.98}
        \\
        
        w/ Post-training (MePo~\cite{sun2026mepo})
        & 45.48\std{1.63}
        & 39.09\std{1.31}
        & 54.10\std{2.16} 
        & 50.05\std{1.84}
        \\

        w/ Post-training (MePo++)
        & \bfseries 47.34\std{1.63}
        & \bfseries 40.77\std{1.84}
        & \bfseries 55.18\std{1.72}
        & \bfseries 50.20\std{1.43}
        \\

        \bottomrule
    \end{tabular}%
    }
\end{table}

\subsection{Ablation Study}

We further dissect MePo++ by examining the contribution of its components and replacing each with alternative designs while keeping the remaining framework unchanged to isolate their individual effects under the same evaluation protocol.

\myPara{Effect of Core Modules}
We first evaluate the individual contributions of MetaPrep and StreamAlign across CIFAR-100, ImageNet-R, and CUB-200 using both DualPrompt and MISA. As shown in \cref{tab:ablation}, either component alone generally improves the corresponding baseline, while their combination consistently performs best. On ImageNet-R, MetaPrep improves DualPrompt by $4.51/4.22$ percentage points in $A_{\mathrm{AUC}}/A_{\mathrm{Last}}$ ($11.67\%/13.74\%$ relative), while StreamAlign yields gains of $3.84/6.21$ points ($9.94\%/20.22\%$). Combining both further increases the gains to $8.71/10.06$ points ($22.55\%/32.76\%$). A similar complementary effect is observed with MISA, where the complete model achieves the strongest performance. The same trend holds on CIFAR-100 and CUB-200, confirming that MetaPrep and StreamAlign address complementary aspects of PTM-based GCL by improving plasticity and stability, respectively.

\myPara{Effect of Alternative Designs for Representation Preparation}
To examine whether the gain of MetaPrep simply results from additional upstream adaptation, we keep StreamAlign unchanged and replace MetaPrep with alternative post-training strategies. As shown in \cref{tab:post_ablation}, compared with no representation preparation, MetaPrep improves DualPrompt by $4.87/3.85$ percentage points in $A_{\mathrm{AUC}}/A_{\mathrm{Last}}$ ($11.47\%/10.43\%$ relative), and MISA by $2.57/2.19$ points ($4.89\%/4.56\%$ relative). OML~\cite{javed2019meta} and the preliminary MePo refinement also yield consistent improvements, confirming the general benefit of preparing pretrained representations before deployment. MetaPrep nevertheless performs best across all metrics, indicating that the improvement stems not merely from additional post-training, but from its GCL-oriented pseudo continual sequence construction and meta-refinement procedure.

\myPara{Effect of Alternative Designs for Representation Reconciliation}
To isolate the design of StreamAlign, we fix the representation preparation stage and replace the downstream reconciliation mechanism with alternative alignment strategies. As shown in \cref{tab:out_ablation}, StreamAlign consistently outperforms no alignment, mask-based alignment, and the covariance-only alignment used in preliminary MePo. Compared with covariance-only alignment, StreamAlign further improves DualPrompt and MISA, indicating that geometric matching alone is insufficient. Explicitly reconciling the reconstructed and online representations at the semantic level provides additional gains under evolving streams, validating the joint geometric and semantic design of StreamAlign for maintaining both stability and discriminability across downstream tasks.

\begin{table}[t]
    \centering
    \caption{
        Effect of different representation reconciliation strategies on ImageNet-R using the Sup-21K pretrained backbone with DualPrompt and MISA.
    }
    \label{tab:out_ablation}
    \setlength{\tabcolsep}{3.2pt}
    \renewcommand{\arraystretch}{1.08}

    \resizebox{\linewidth}{!}{%
    \begin{tabular}{
        l
        *{4}{
            S[
                table-format=2.2,
                table-space-text-post={\std{11.55}}
            ]
        }
    }
        \toprule
        \multirow{2.5}{*}{\textbf{Output Alignment Strategy}}
        &
        \multicolumn{2}{c}{\textbf{DualPrompt}}
        &
        \multicolumn{2}{c}{\textbf{MISA}}
        \\
        \cmidrule(lr){2-3}
        \cmidrule(lr){4-5}
        &
        {$A_{\mathrm{AUC}}~(\uparrow)$}
        &
        {$A_{\mathrm{Last}}~(\uparrow)$}
        &
        {$A_{\mathrm{AUC}}~(\uparrow)$}
        &
        {$A_{\mathrm{Last}}~(\uparrow)$}
        \\
        \midrule

        w/o Alignment
        & 43.14\std{2.18}
        & 34.93\std{1.50}
        & 43.44\std{1.80}
        & 33.78\std{1.63}
        \\

        w/ Mask Alignment (MISA~\cite{kang2025advancing})
        & 44.33\std{2.38}
        & 35.01\std{1.12}
        & 52.63\std{1.95}
        & 46.03\std{1.21}
        \\

        w/ Covariance Alignment (MePo~\cite{sun2026mepo})
        & 44.29\std{2.14}
        & 36.26\std{1.32}
        & 53.26\std{2.32}
        & 47.81\std{1.56}
        \\

        w/ StreamAlign (MePo++)
        & \bfseries 47.34\std{1.63}
        & \bfseries 40.77\std{1.84}
        & \bfseries 55.18\std{1.72}
        & \bfseries 50.20\std{1.43}
        \\

        \bottomrule
    \end{tabular}%
    }
\end{table}

\begin{figure}
    \centering
    \includegraphics[width=\linewidth]{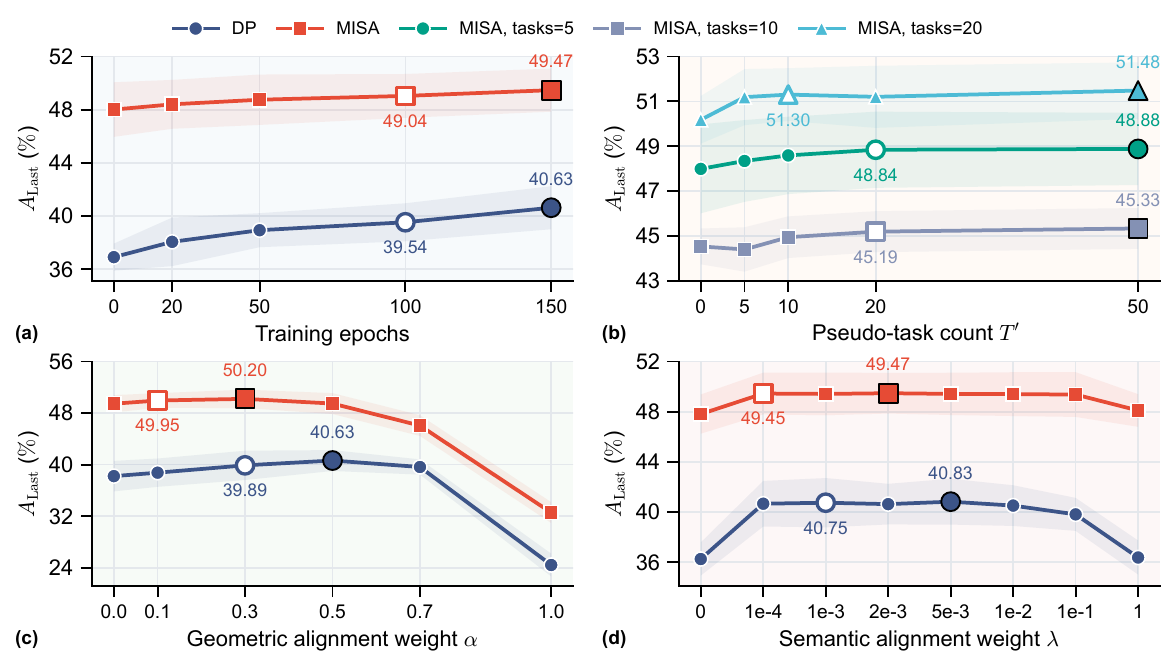}
    \caption{\textbf{Sensitivity analysis of key hyperparameters.} (a) Number of MetaPrep training epochs. (b) Number of pseudo-tasks $T'$ under different stream granularities. (c) Geometric alignment weight $\alpha$. (d) Semantic alignment weight $\lambda$ during online adaptation.}
    \label{fig:param}
    \phantomsubcaption\label{fig:param-a}
    \phantomsubcaption\label{fig:param-b}
    \phantomsubcaption\label{fig:param-c}
    \phantomsubcaption\label{fig:param-d}
\end{figure}

\myPara{Effect of Key Hyperparameters}
\cref{fig:param-a,fig:param-b} analyze the two key hyperparameters of MetaPrep. As shown in \cref{fig:param-a}, increasing the number of meta-training epochs steadily improves $A_{\mathrm{Last}}$, with both DualPrompt and MISA approaching saturation around 100--150 epochs. \cref{fig:param-b} further shows that increasing the pseudo-task count $T'$ generally improves performance before reaching a stable plateau, and this trend remains consistent across different stream granularities. For StreamAlign, \cref{fig:param-c} shows that the geometric alignment weight $\alpha$ achieves the best trade-off around $0.3$--$0.5$; overly aggressive alignment, particularly $\alpha=1$, substantially degrades performance, indicating that fully replacing the online representation with its aligned counterpart suppresses useful plastic information. As shown in \cref{fig:param-d}, the semantic alignment weight $\lambda$ is relatively robust over a broad range, with the best performance around $2\times10^{-3}$--$5\times10^{-3}$, while an excessively large weight again harms adaptation. Overall, MePo++ remains stable across reasonable hyperparameter ranges, supporting a moderate balance among representation refinement, geometric alignment, and semantic reconciliation.

\subsection{Qualitative and Quantitative Analysis}

We further analyze MePo++ from representation, optimization, and prediction perspectives. These analyses provide complementary evidence for how MePo++ improves continual adaptation beyond aggregate accuracy metrics.

\myPara{Overall Representation Quality}
\Cref{fig:representation_visualization} compares the feature spaces learned by MISA and MePo++ across incremental stages T0--T4. MePo++ consistently forms more compact class clusters and clearer inter-class boundaries, indicating a better organized representation space throughout the continual stream. This is quantitatively supported by the separation score, which improves from $0.6786$, $0.4889$, $0.4714$, $0.3314$, and $0.5594$ with MISA to $1.0013$, $0.8574$, $1.0388$, $0.8688$, and $0.8567$ with MePo++, respectively. The gains remain clear at later stages, where blurry observations and newly introduced classes make representation maintenance more challenging. This behavior is consistent with the design of MePo++: MetaPrep improves continual adaptability before deployment, while StreamAlign stabilizes the evolving feature space during online adaptation, together yielding more discriminative representations across the stream.

\begin{figure}
    \centering
    \includegraphics[width=\linewidth]{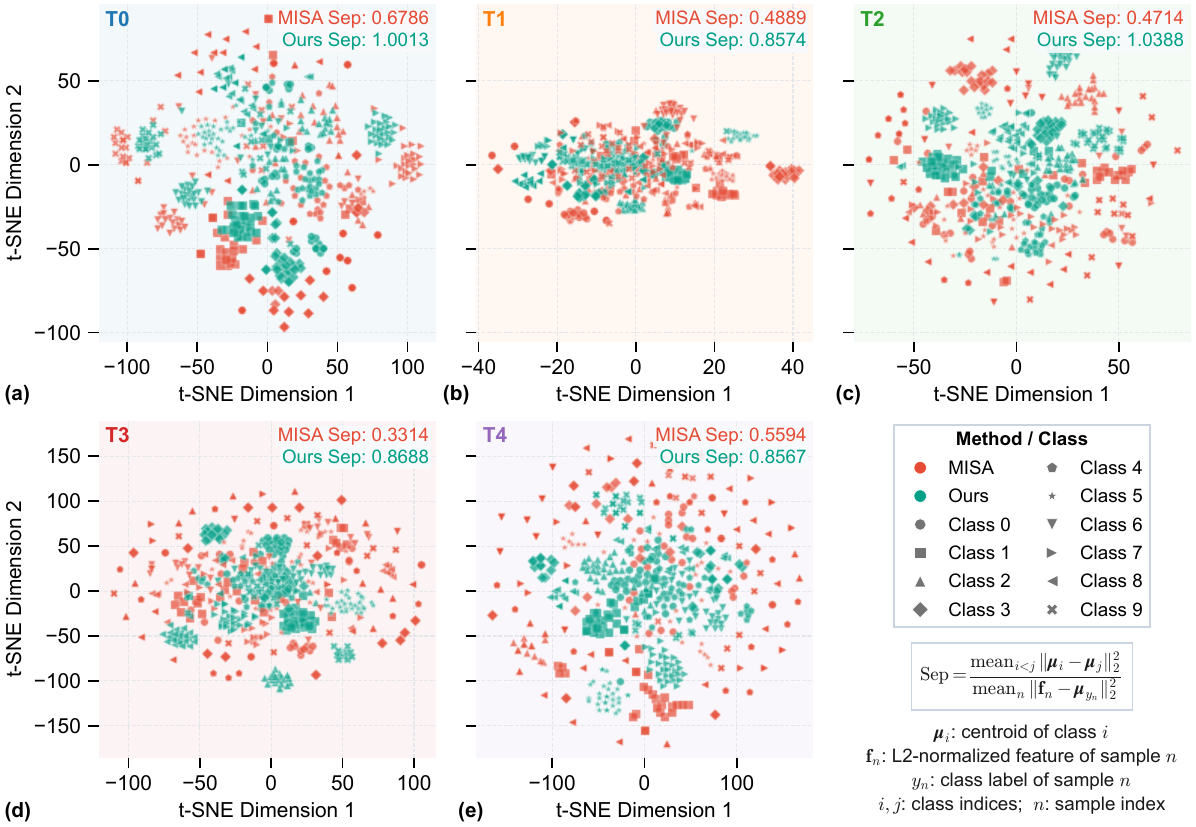}
    \caption{\textbf{Representation quality across incremental tasks.} t-SNE visualizations of representations learned by MISA and MePo++ from T0 to T4. Marker shapes denote different classes, and higher separation scores indicate better class-level discriminability.}
    \label{fig:representation_visualization}
\end{figure}

\myPara{Sparsity of Refined Representations}
\cref{fig:sparse_activation} compares the feature activation patterns of MISA and MePo++ across incremental tasks T0--T4. MePo++ consistently activates substantially fewer feature dimensions, with active rates reduced from $58.07\%$, $73.18\%$, $48.57\%$, $70.44\%$, and $43.62\%$ under MISA to $20.96\%$, $7.16\%$, $8.98\%$, $13.67\%$, and $13.80\%$, respectively. Instead of distributing responses broadly across dimensions, MePo++ concentrates activation on a smaller subset of informative features, yielding a sparser and more selective representation space. Such representations are particularly desirable in CL, where sparse and disentangled feature structures have been shown to reduce representational overlap and interference across classes~\cite{michieli2021continual,pourcel2022online}. Meta-learning for continual adaptation has also been observed to naturally induce sparse representations that facilitate subsequent online learning~\cite{javed2019meta}, while feature decorrelation provides a complementary means of reducing class interference and catastrophic forgetting~\cite{shi2022mimicking}. The observed activation patterns are therefore consistent with the role of MetaPrep in refining pretrained representations for sequential adaptation, providing a cleaner and less interfering representation basis for downstream CL.

\begin{figure}
    \centering
    \includegraphics[width=\linewidth]{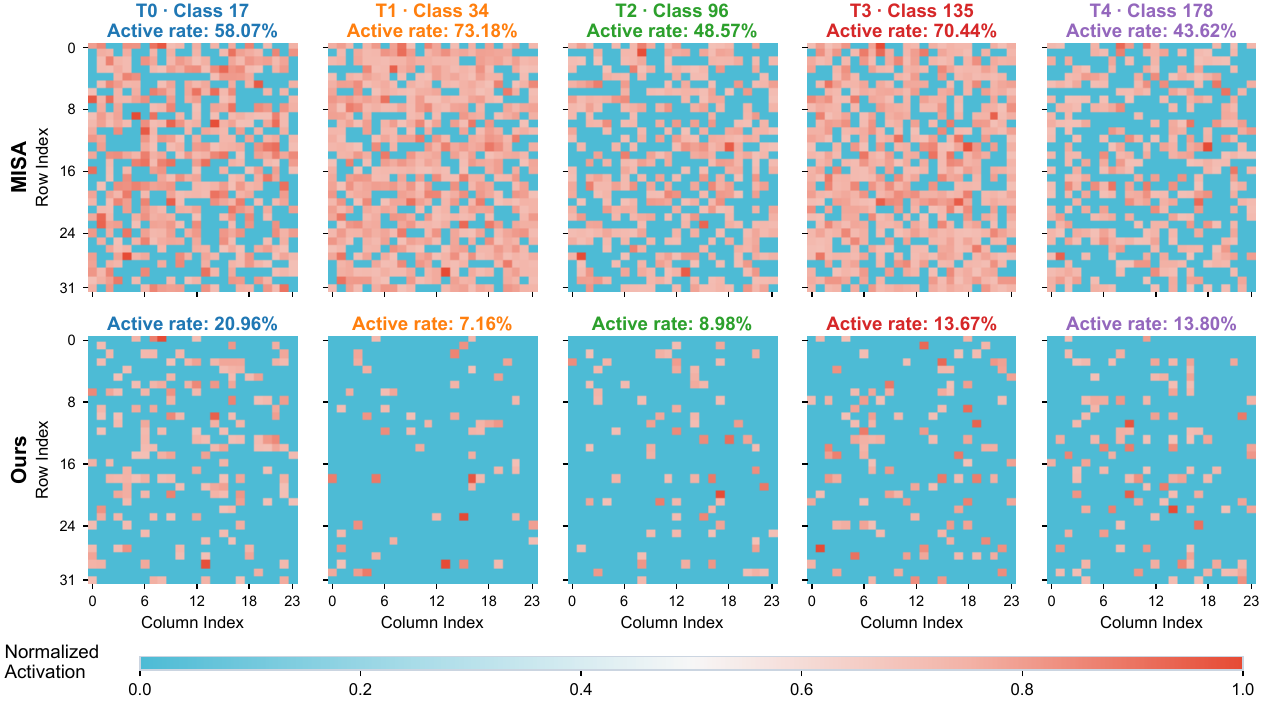}
    \caption{\textbf{Sparse feature activations across incremental tasks.} Feature activation patterns of MISA (top) and MePo++ (bottom) from T0 to T4. Activations below $0.7$ are suppressed. MePo++ consistently exhibits lower active rates, indicating sparser representations.}
    \label{fig:sparse_activation}
\end{figure}

\begin{figure}
    \centering
    \includegraphics[width=\linewidth]{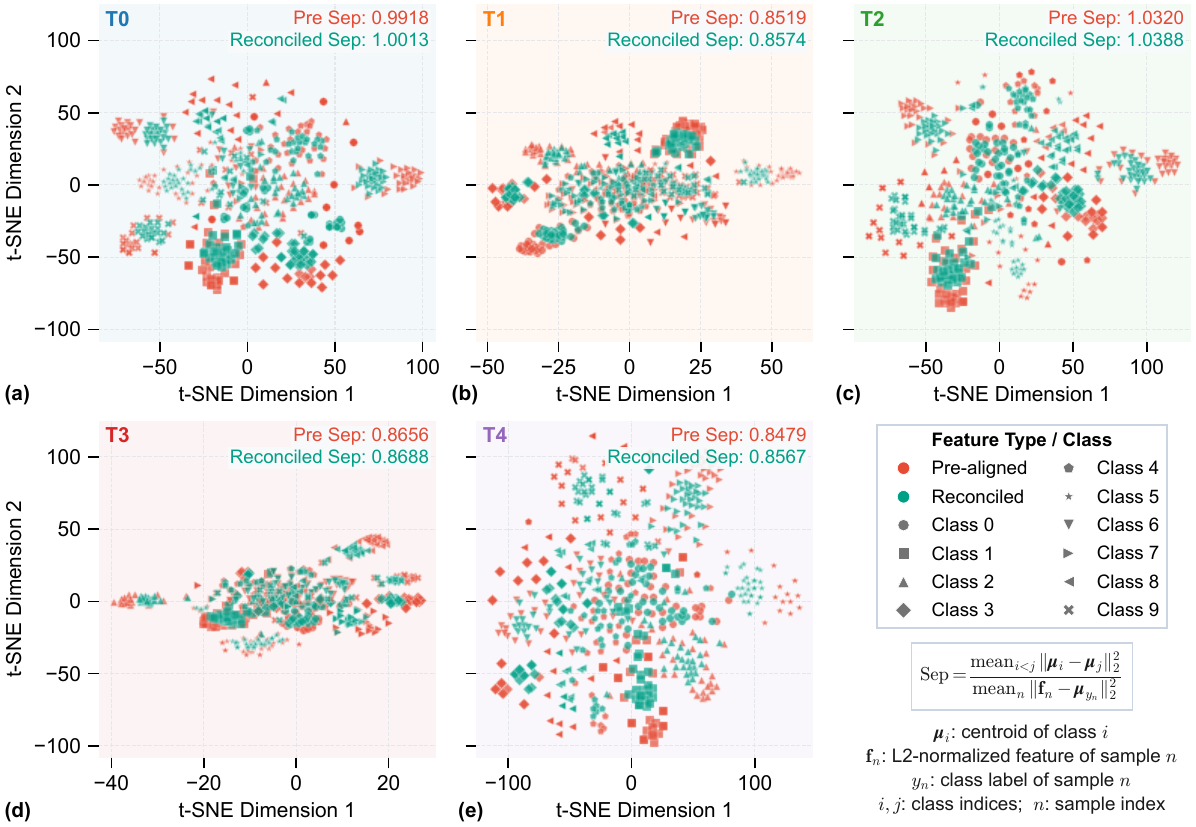}
    \caption{\textbf{Effect of representation reconciliation across incremental tasks.} Overlaid t-SNE visualizations of pre-aligned and reconciled features from T0 to T4. Marker shapes denote classes, colors distinguish feature types, and higher separation scores indicate greater class-level discriminability across tasks.}
    \label{fig:feature_evolution}
\end{figure}

\myPara{Effect of Representation Reconciliation}
To isolate the contribution of StreamAlign, \cref{fig:feature_evolution} overlays the pre-aligned features $\mathbf{f}_{i}$ and reconciled features $\mathbf{f}_{\mathrm{trans},i}$. Across T0--T4, reconciliation preserves the global semantic layout while introducing localized class-conditional adjustments. The separation score increases from $0.9918$ to $1.0013$, $0.8519$ to $0.8574$, $1.0320$ to $1.0388$, $0.8656$ to $0.8688$, and $0.8479$ to $0.8567$, respectively. These consistent absolute gains of $0.0032$--$0.0095$, with the largest improvements at T0 and T4, indicate systematic refinement rather than wholesale restructuring of the online feature space. This behavior supports StreamAlign: geometry-guided reconstruction anchors evolving representations to a stable upstream reference, while semantic reconciliation preserves adaptive class structure throughout the stream.

\myPara{Loss Landscape Analysis}
To further examine the optimization characteristics of MePo++, \cref{fig:loss_landscape} visualizes the loss landscape around the converged solutions of DualPrompt and MePo++. Compared with DualPrompt, MePo++ exhibits a broader low-loss region with more widely spaced contours, indicating a flatter local optimum and reduced sensitivity to parameter perturbations. This suggests that the representation refined by MetaPrep and stabilized by StreamAlign leads to a more robust optimization geometry, which is consistent with the improved stability and lower forgetting observed throughout continual adaptation to evolving data streams.

\begin{figure}
    \centering
    \includegraphics[width=\linewidth]{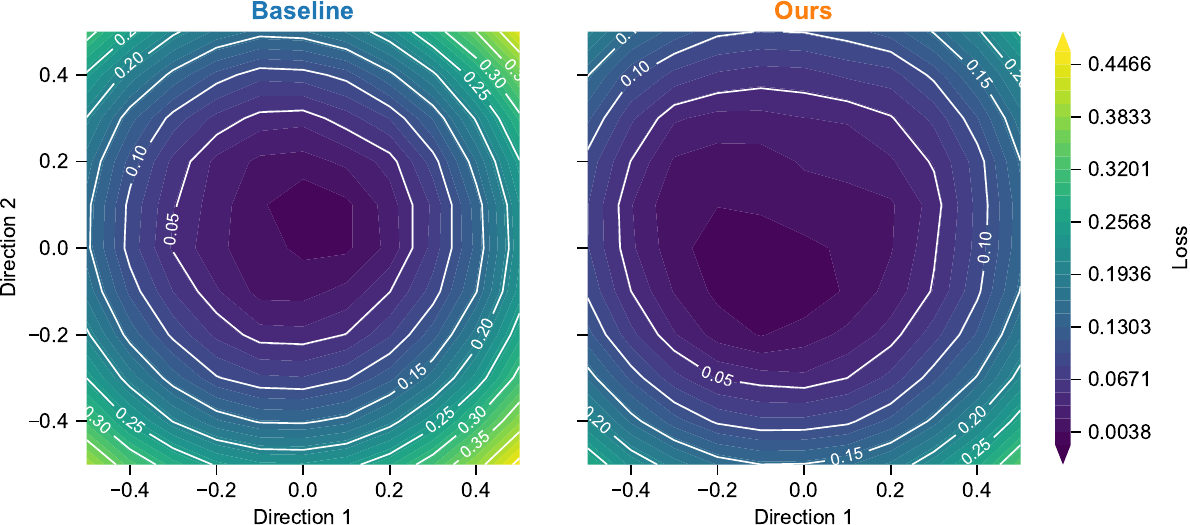}
    \caption{\textbf{Loss landscape comparison between DualPrompt (left) and MePo++ (right).} MePo++ exhibits a broader and flatter low-loss region around the converged solution, indicating reduced sensitivity to parameter perturbations during continual adaptation.}
    \label{fig:loss_landscape}
\end{figure}

\myPara{Case Study}
\cref{fig:case_study} provides a fine-grained comparison between MISA and MePo++ at both the prediction and confidence levels on ImageNet-R. Overall, MePo++ achieves higher accuracy than MISA ($48.3\%$ vs.\ $45.6\%$), with $10.3\%$ of the test samples corrected exclusively by MePo++ compared with $7.6\%$ corrected exclusively by MISA. More importantly, among the samples misclassified by both methods, MePo++ assigns a higher probability to the ground-truth class in $22.5\%$ of all test samples, indicating that its predictions are often shifted closer to the correct semantic category even when the final decision remains incorrect. \cref{fig:case_study-b,fig:case_study-c} further show that MePo++ maintains substantially higher ground-truth confidence for earlier classes across subsequent evaluation tasks, whereas MISA exhibits pronounced confidence decay. The examples in \cref{fig:case_study-d,fig:case_study-e,fig:case_study-f} illustrate successful corrections with substantially increased ground-truth confidence, while \cref{fig:case_study-g} shows a failure case where MePo++ still raises the ground-truth probability despite making the same incorrect prediction. These observations provide sample-level evidence that MePo++ not only improves accuracy but also preserves reliable semantic evidence for previously learned classes as the continual stream evolves.

\begin{figure}
    \centering
    \includegraphics[width=\linewidth]{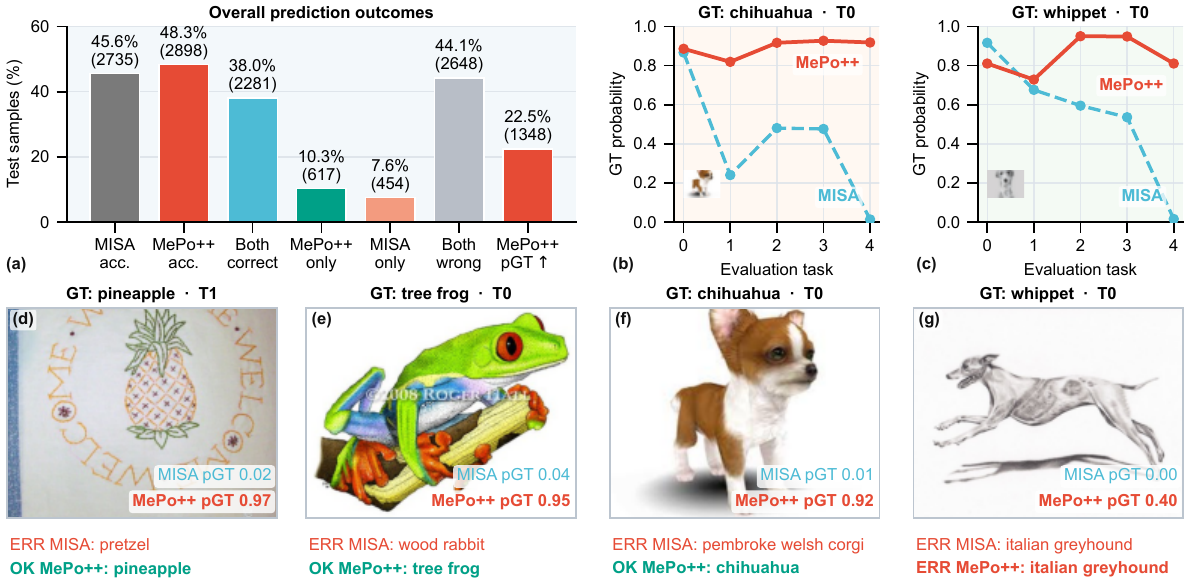}
    \caption{\textbf{Case study of prediction outcomes and ground-truth confidence.} (a) Overall prediction outcomes of MISA and MePo++; ``MePo++ pGT$\uparrow$'' denotes samples misclassified by both methods for which MePo++ assigns a higher probability to the ground-truth class than MISA. (b--c) Evolution of ground-truth probability for representative classes across evaluation tasks. (d--f) Examples corrected by MePo++ with substantially higher ground-truth confidence. (g) A failure case where both methods are incorrect, while MePo++ still assigns a higher probability to the ground-truth class.}
    \label{fig:case_study}
    \phantomsubcaption\label{fig:case_study-a}
    \phantomsubcaption\label{fig:case_study-b}
    \phantomsubcaption\label{fig:case_study-c}
    \phantomsubcaption\label{fig:case_study-d}
    \phantomsubcaption\label{fig:case_study-e}
    \phantomsubcaption\label{fig:case_study-f}
    \phantomsubcaption\label{fig:case_study-g}
\end{figure}

\section{Conclusion}

We revisited PTM-based GCL from a unified representation perspective and identified two challenges: upstream-downstream misalignment and a downstream alignment gap caused by unreliable statistics in blurry streams. MePo++ addresses them through MetaPrep, which refines representations before deployment via pseudo-sequence construction and bi-level meta-refinement, and StreamAlign, which reconciles online features with a stable geometry prior. Experiments across datasets, PTMs, and learners show consistent gains in standard, few-shot, and CLIP-based continual learning. Ablations and representation analyses further show more selective features and preserved class structure throughout evolving streams.

\myPara{Limitations and Future Work}
MePo++ has two limitations. First, MetaPrep introduces an additional pre-deployment stage and requires unlabeled upstream data, increasing the preparation cost for large models. Second, StreamAlign relies on a fixed geometry prior that provides a stable reference under blurry streams but may become less representative under severe or long-term distribution shifts. Future work could reduce refinement cost and adapt geometry priors using reliable downstream evidence while preserving upstream knowledge.

\begin{IEEEbiography}[{\includegraphics[width=1in,height=1.1in,clip,keepaspectratio]{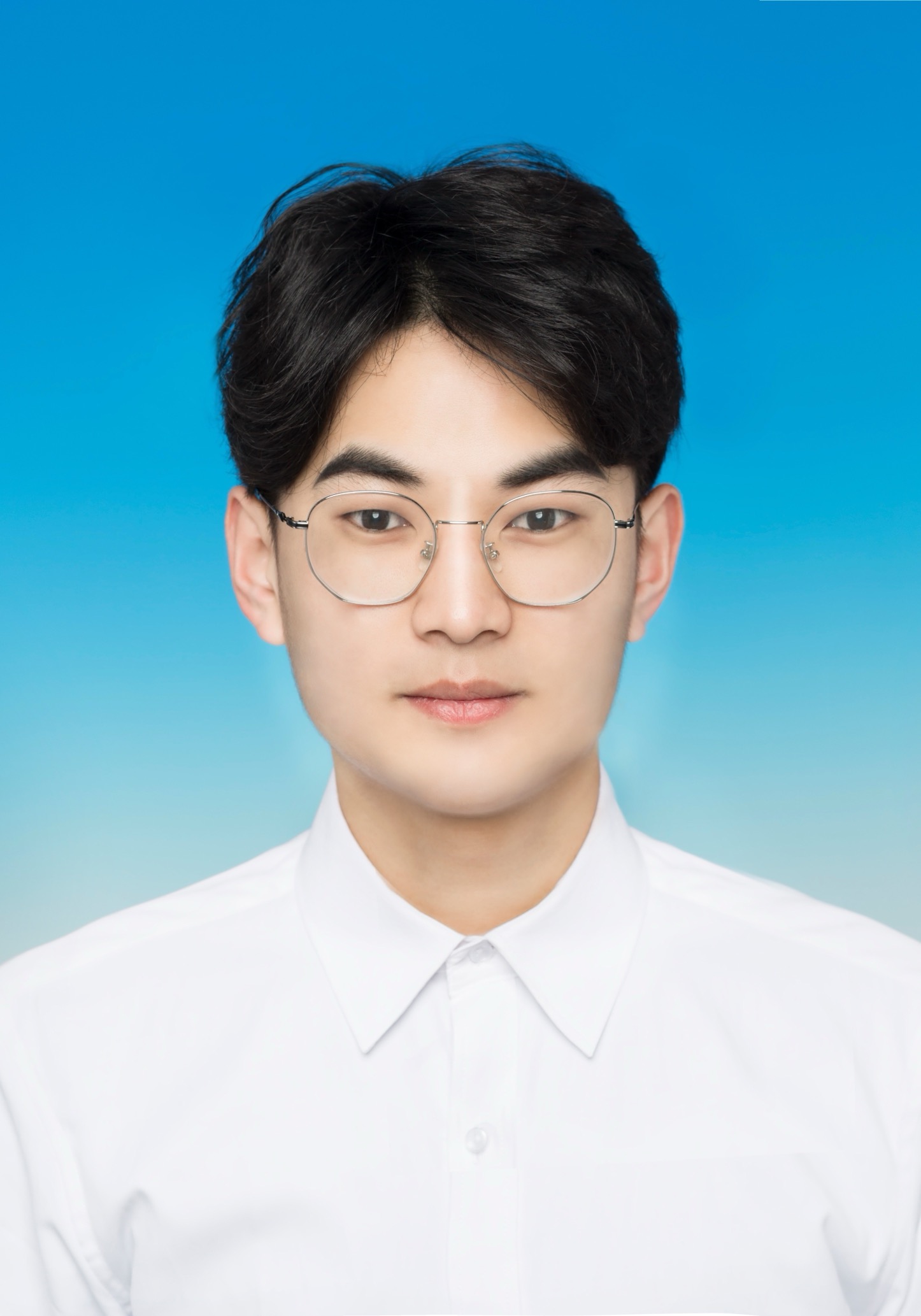}}]{Guanglong Sun}
    is currently pursuing his Ph.D. degree in Biology at the School of Life Sciences and IDG/McGovern Institute for Brain Research, Tsinghua University (advised by Prof. Jun Zhu and Prof. Yi Zhong). His research focuses on NeuroAI, which explores how principles of biological learning and memory can inspire more adaptive and generalizable artificial intelligence systems. He has published papers in major conferences and journals in related fields, including Patterns, ICML, ICLR, ECCV, IROS, etc.
\end{IEEEbiography}
\begin{IEEEbiography}[{\includegraphics[width=1in,height=1.1in,clip,keepaspectratio]{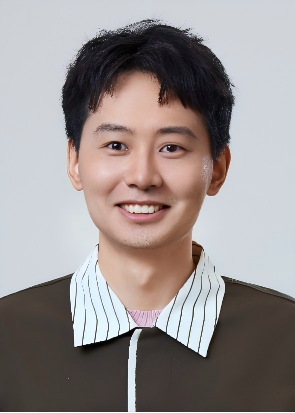}}]{Kanglei Zhou}
    is currently a Postdoctoral Researcher with the Department of Psychological and Cognitive Science, Tsinghua University. He received his Ph.D. degree in Computer Science and Engineering from Beihang University in 2025 and his B.E. degree from Henan Normal University in 2020. In 2024, he was a Visiting Student at Durham University. His research interests include human motion analysis and continual learning, with publications in leading venues such as IJCV, TIP, TVCG, ICML, ICLR, CVPR, and ECCV.
\end{IEEEbiography}
\begin{IEEEbiography}[{\includegraphics[width=1in,height=1.1in,clip,keepaspectratio]{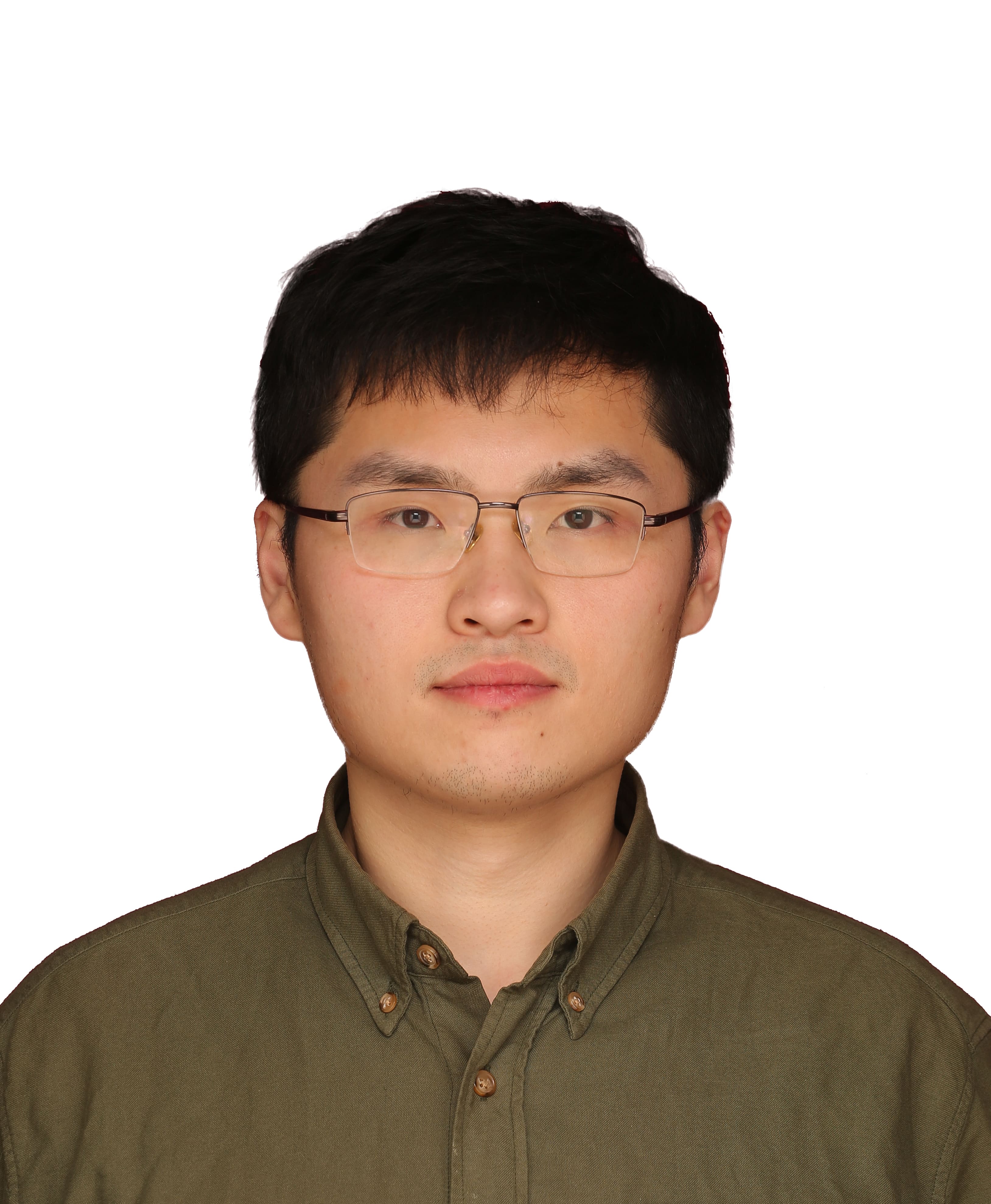}}]{Liyuan Wang} (Member, IEEE)
    is an Assistant Professor at the Department of Psychological and Cognitive Sciences, Tsinghua University. He received the B.S. and Ph.D. degrees from Tsinghua University, where he also conducted his postdoctoral research. His work on continual learning has been published in major conferences and journals in related fields, such as Nature Machine Intelligence, Nature Communications, TPAMI, Patterns, NeurIPS, ICLR, ICML, etc.
\end{IEEEbiography} 
\begin{IEEEbiography}[{\includegraphics[width=1in,height=1.1in,clip,keepaspectratio]{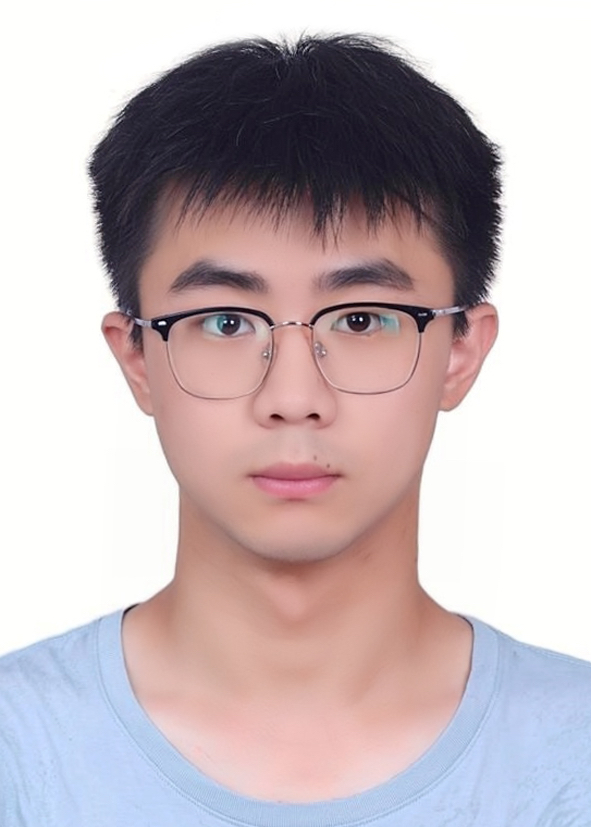}}]{Qi Cheng}
    is currently an undergraduate student at Shanghai Jiao Tong University. His early research focused on computer vision and multimodal learning. His current research interests center on Brain4AI, particularly brain-inspired artificial intelligence, continual learning, memory systems, and computational neuroscience.
\end{IEEEbiography}
\begin{IEEEbiography}[{\includegraphics[width=1in,height=1.1in,clip,keepaspectratio]{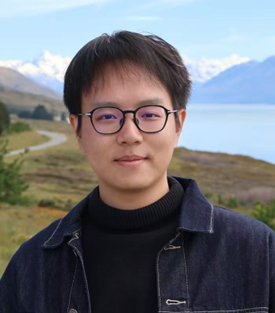}}]{Hongwei Yan}
    received the B.S. degree in computer science from the Institute for Interdisciplinary Information Sciences (Yao's Class), Tsinghua University, Beijing, China, in 2022. He is currently pursuing the Ph.D. degree in biology at the School of Life Science, Tsinghua University, where his research is centered on brain-inspired artificial intelligence. His primary research interests cover continual learning, domain generalization, multi-modal foundation models. He has published multiple papers in major AI conferences including CVPR, ICML and ICLR.
\end{IEEEbiography} 
\begin{IEEEbiography}[{\includegraphics[width=1in,height=1.1in,clip,keepaspectratio]{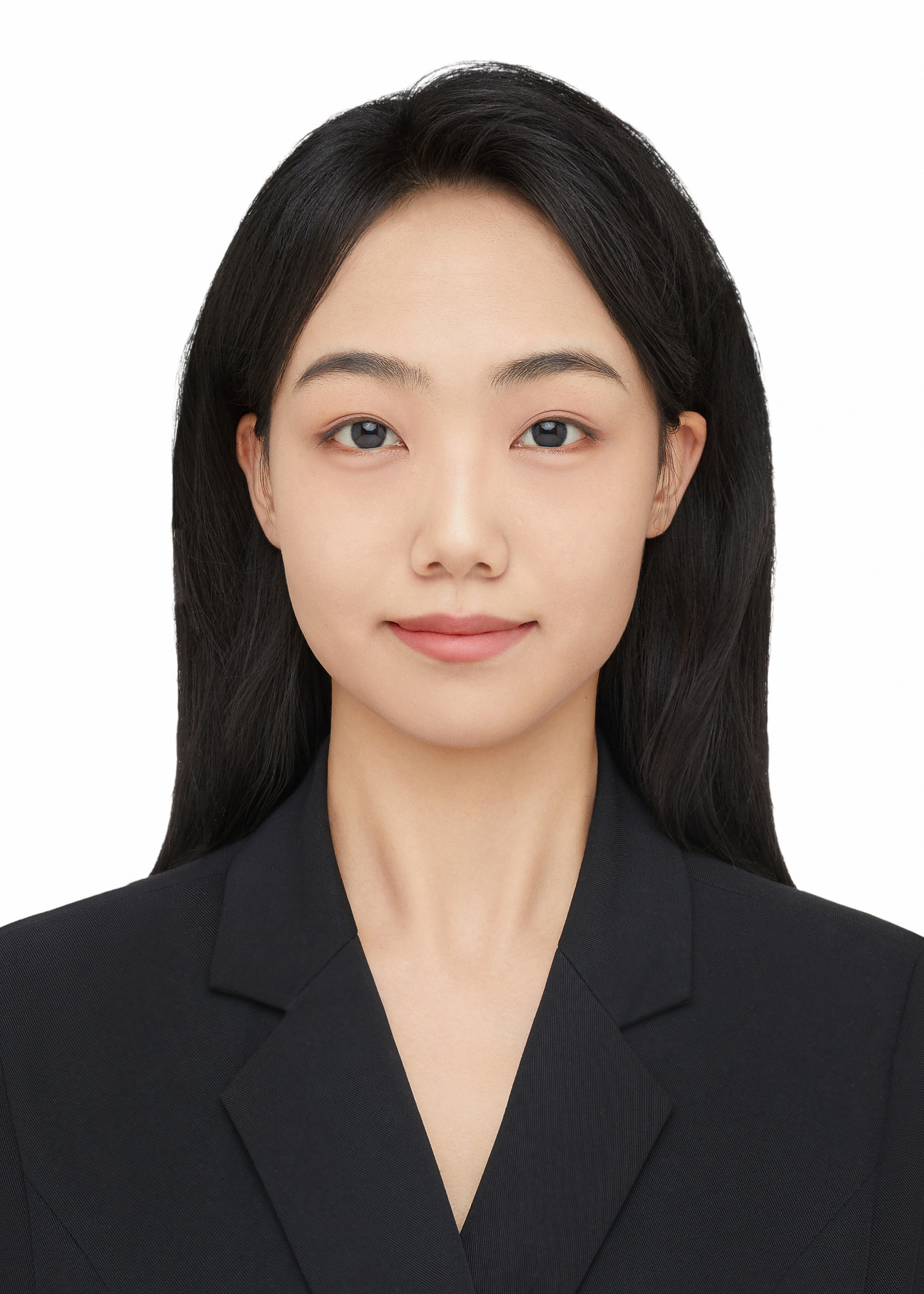}}]{Shuang Cui}
    received the B.S. degree from Northwest A\&F University, Xianyang, China, in 2021. She is currently pursuing the Ph.D. degree at the University of Chinese Academy of Sciences and the Institute of Software, Chinese Academy of Sciences, Beijing, China. Her research interests include transfer learning, test-time adaptation, and continual learning.
\end{IEEEbiography} 
\begin{IEEEbiography}
[{\includegraphics[width=1in,height=1.1in,clip,keepaspectratio]{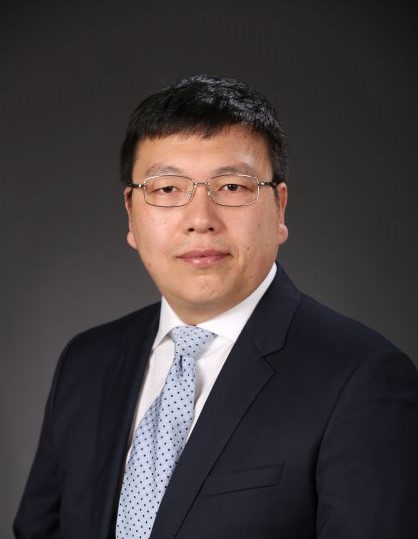}}] {Hang Su} (Member, IEEE) 
    is an associated professor with the Department of Computer Science and Technology, Tsinghua University. His research interests lie in the adversarial machine learning and robust computer vision, based on which he has published more than 50 papers including CVPR, ECCV, IEEE Transactions on Medical Imaging, etc.
    He received ``Young Investigator Award'' from MICCAI2012, the ``Best Paper Award'' in AVSS2012, and ``Platinum Best Paper Award'' in ICME2018.
\end{IEEEbiography}
\begin{IEEEbiography}
[{\includegraphics[width=1in,height=1.1in,clip,keepaspectratio]{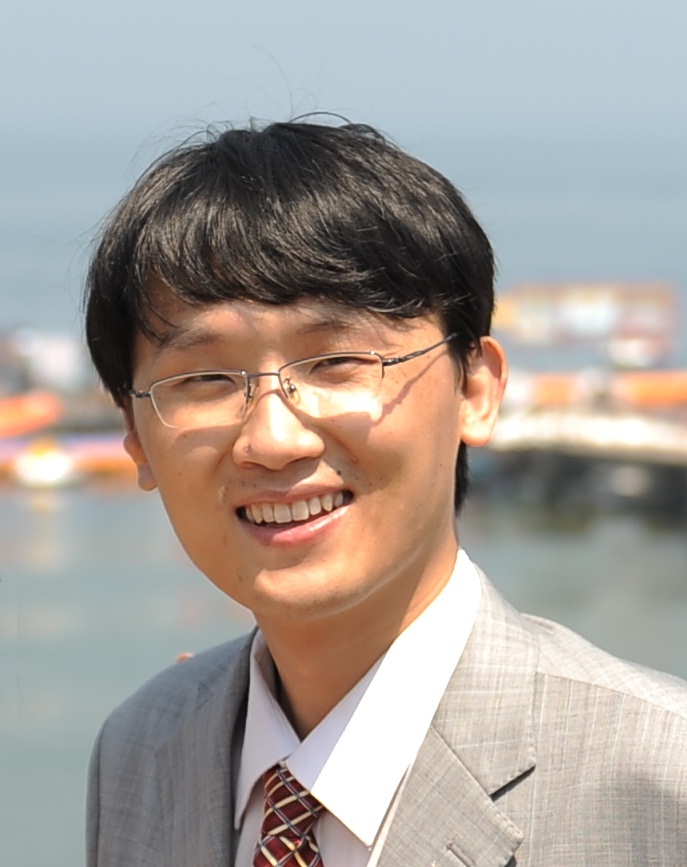}}]{Jun Zhu} (Fellow, IEEE)
    received his B.S. and Ph.D. degrees from the Department of Computer Science and Technology in Tsinghua University, where he is currently a Bosch AI Professor. He was an adjunct faculty and postdoctoral fellow in the Machine Learning Department, Carnegie Mellon University. His research interest is primarily on developing machine learning methods to understand scientific and engineering data arising from various fields. He regularly serves as senior Area Chairs and Area Chairs at prestigious conferences, including ICML, NeurIPS, ICLR, IJCAI and AAAI. He was selected as ``AI's 10 to Watch'' by IEEE Intelligent Systems. He is a Fellow of the IEEE and an associate editor-in-chief of IEEE TPAMI. 
\end{IEEEbiography}
\begin{IEEEbiography}
[{\includegraphics[width=1in,height=1.1in,clip,keepaspectratio]{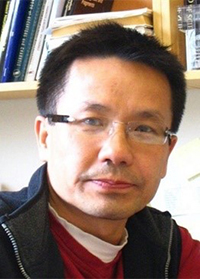}}]{Yi Zhong}
    is a Professor at the School of Life Sciences and IDG/McGovern Institute for Brain Research, Tsinghua University. His research focuses on the biological mechanisms and theoretical principles of learning and memory, as well as their applications in brain-inspired artificial intelligence. His research has been published in major conferences and journals in related fields, including Nature, Science, Cell, Nature Machine Intelligence, ICML, ICLR, NeurIPS, etc.
\end{IEEEbiography}

\clearpage

\newpage
\appendices
\renewcommand{\thesection}{Appendix \Alph{section}}
\renewcommand{\thefigure}{A\arabic{figure}}
\renewcommand{\thetable}{A\arabic{table}}
\renewcommand{\theequation}{A\arabic{equation}}

\onecolumn

\crefalias{section}{appendix}

\section{Theoretical Interpretation and Analysis of MePo++}
\label{app:theory}

This appendix provides a theoretical interpretation of MePo++ from the perspective of sequential representation adaptation. Rather than claiming an exact probabilistic formulation or global convergence guarantee, our goal is to characterize several local properties that directly reflect the design of MetaPrep and StreamAlign. We first use sequential Bayesian updating as a conceptual decomposition of the plasticity--stability requirement in continual learning. We then show that the post-sequence objective underlying MetaPrep is locally sensitive to the compatibility between sequential adaptation directions and the broader pseudo-semantic objective. For StreamAlign, we establish the second-order geometry induced by its regularized covariance transport and show that representation reconciliation provides a controlled correction to the plastic online feature. Together, these results provide a method-specific interpretation of how MePo++ prepares representations before deployment and constrains their evolution during continual adaptation.

\subsection{A Sequential View of Continual Representation Adaptation}
\label{app:bayesian}

\myPara{Plasticity--Stability Decomposition}
Consider the GCL stream
$\mathcal{D}=\{\mathcal{B}_t\}_{t=1}^{T}$,
where $\mathcal{B}_t$ denotes the observations available at learning step $t$.
Let $\boldsymbol{\vartheta}$ denote the adaptive model state.
A generic sequential Bayesian update can be written as
\begin{equation}
p(\boldsymbol{\vartheta}\mid \mathcal{B}_{1:t})
=
\frac{
p(\mathcal{B}_t
\mid
\boldsymbol{\vartheta},
\mathcal{B}_{1:t-1})
p(\boldsymbol{\vartheta}\mid\mathcal{B}_{1:t-1})
}{
p(\mathcal{B}_t\mid\mathcal{B}_{1:t-1})
},
\label{eq:bayes_seq}
\end{equation}
where
$\mathcal{B}_{1:t}=\{\mathcal{B}_1,\ldots,\mathcal{B}_t\}$.
Taking the negative logarithm gives
\begin{equation}
-\log
p(\boldsymbol{\vartheta}\mid\mathcal{B}_{1:t})
=
\underbrace{
-\log
p(\mathcal{B}_t
\mid
\boldsymbol{\vartheta},
\mathcal{B}_{1:t-1})
}_{\mathcal{L}_{\mathrm{adapt}}^{t}}
+
\underbrace{
-\log
p(\boldsymbol{\vartheta}\mid\mathcal{B}_{1:t-1})
}_{\mathcal{R}_{\mathrm{retain}}^{t}}
+
C_t,
\label{eq:bayes_objective}
\end{equation}
where $C_t$ is independent of $\boldsymbol{\vartheta}$.
The first term favors responsiveness to current observations, corresponding conceptually to \emph{plasticity}, whereas the second preserves information accumulated from previous observations, corresponding to \emph{stability}.

Equation~\eqref{eq:bayes_objective} is used only as a conceptual decomposition and does not imply that MePo++ performs Bayesian inference. Its relevance to GCL is that both terms are difficult to realize directly: the incoming batch is sparse and potentially contains mixed concepts, while explicit task identities, boundaries, and complete historical statistics are unavailable. MePo++ therefore addresses these complementary requirements at different stages of the representation lifecycle rather than explicitly maintaining the posterior in Eq.~\eqref{eq:bayes_seq} during online adaptation.

\myPara{Lifecycle-Wise Decomposition}
Let $f_{\boldsymbol{\theta}_0}$ denote the original pretrained encoder.
Before downstream deployment, MetaPrep produces the following representation interface:
\begin{equation}
\operatorname{MetaPrep}
(\boldsymbol{\theta}_0,\mathcal{D}_{\mathrm{pre}})
\longrightarrow
(\boldsymbol{\theta}^{\ast},
\boldsymbol{\Sigma}_{\mathrm{pre}}),
\label{eq:theory_metaprep_interface}
\end{equation}
where $\boldsymbol{\theta}^{\ast}$ is refined according to its behavior after simulated sequential adaptation, while
$\boldsymbol{\Sigma}_{\mathrm{pre}}$
summarizes the global between-prototype geometry of the refined upstream representation.

During downstream learning, StreamAlign uses
\begin{equation}
\operatorname{StreamAlign}
(
\mathbf f_i^t,
\boldsymbol{\Sigma}_{\mathrm{cur}}^t,
\boldsymbol{\Sigma}_{\mathrm{pre}}
)
\longrightarrow
\mathbf f_{\mathrm{trans},i}^t
\label{eq:theory_streamalign_interface}
\end{equation}
to reconcile the evolving online representation with this fixed structural reference.
Importantly,
$\boldsymbol{\Sigma}_{\mathrm{pre}}$
is not intended to approximate a complete Bayesian prior or posterior.
It retains only second-order information about the organization of refined pseudo-semantic prototypes and therefore serves as a structural prior over representation geometry.

\subsection{Local Interpretation of MetaPrep for Continual Learnability}
\label{app:metaprep_theory}

We next examine why evaluating a representation \emph{after} pseudo-sequential adaptation provides a different optimization signal from ordinary post-training on shuffled upstream observations.

\myPara{First-Order Effect of Sequential Adaptation}
Consider one meta-epoch $k$ and suppress the superscript $(k)$ when no ambiguity arises.
Because both the encoder and auxiliary prediction head are updated in the inner loop, define their joint state for the sequential analysis as follows:
\begin{equation}
\mathbf u
=
(\boldsymbol{\theta},\boldsymbol{\psi}),
\qquad
\mathbf u_0
=
(\boldsymbol{\theta}_0,\boldsymbol{\psi}_0).
\end{equation}
Let
$\eta_{\psi}=\rho\eta_{\theta}$
for a fixed finite ratio $\rho>0$, and define
\begin{equation}
\mathbf P
=
\operatorname{diag}
(\mathbf I_{\theta},
\rho\mathbf I_{\psi}).
\end{equation}
The inner update on pseudo task $t$ can then be written compactly as
\begin{equation}
\mathbf u_t
=
\mathbf u_{t-1}
-
\eta_{\theta}
\mathbf P
\nabla_{\mathbf u}
\mathcal L_t(\mathbf u_{t-1}).
\label{eq:joint_inner_update}
\end{equation}

Define the pseudo-task gradient at the common expansion point as
\begin{equation}
\mathbf g_t
=
\nabla_{\mathbf u}
\mathcal L_t(\mathbf u_0).
\label{eq:task_grad}
\end{equation}
Assume that each $\mathcal L_t$ has locally Lipschitz-continuous gradients and that the inner-loop step sizes are sufficiently small.
A first-order expansion of the sequential trajectory gives
\begin{equation}
\mathbf u_{T'}
=
\mathbf u_0
-
\eta_{\theta}
\sum_{t=1}^{T'}
\mathbf P\mathbf g_t
+
\mathcal O(\eta_{\theta}^{2}).
\label{eq:seq_first_order}
\end{equation}
Thus, locally, the pseudo continual sequence perturbs the current initialization according to the aggregate adaptation directions induced by its constituent pseudo tasks.

\myPara{Post-Sequence Joint Objective}
Let
\begin{equation}
\mathbf g_{\mathrm{joint}}
=
\nabla_{\mathbf u}
\mathcal L_{\mathrm{joint}}(\mathbf u_0)
\label{eq:joint_grad}
\end{equation}
denote the gradient of the held-out joint objective at the same expansion point.

\begin{proposition}[Post-sequence compatibility]
\label{prop:metaprep}
Under the local smoothness and small-step assumptions above, the held-out joint objective after pseudo-sequential adaptation satisfies
\begin{equation}
\mathcal L_{\mathrm{joint}}
(\mathbf u_{T'})
=
\mathcal L_{\mathrm{joint}}
(\mathbf u_0)
-
\eta_{\theta}
\sum_{t=1}^{T'}
\left\langle
\mathbf g_{\mathrm{joint}},
\mathbf P\mathbf g_t
\right\rangle
+
\mathcal O(\eta_{\theta}^{2}).
\label{eq:meta_joint_expand}
\end{equation}
\end{proposition}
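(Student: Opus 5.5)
The plan is a first-order Taylor expansion of $\mathcal L_{\mathrm{joint}}$ around the common expansion point $\mathbf u_0$, combined with the trajectory expansion \cref{eq:seq_first_order}. Write the total displacement as $\Delta:=\mathbf u_{T'}-\mathbf u_0$. By \cref{eq:seq_first_order}, $\Delta=-\eta_{\theta}\sum_{t=1}^{T'}\mathbf P\mathbf g_t+\mathbf r$ with $\|\mathbf r\|=\mathcal O(\eta_{\theta}^2)$. In particular $\|\Delta\|=\mathcal O(\eta_\theta)$, since $T'$ is fixed, $\rho$ is finite, and the $\mathbf g_t$ are fixed vectors.

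First I will establish \cref{eq:seq_first_order} itself, since the paper only asserts it. The argument is an induction on $t$ showing $\mathbf u_t-\mathbf u_0=-\eta_\theta\sum_{s\le t}\mathbf P\mathbf g_s+\mathcal O(\eta_\theta^2)$.
\begin{itemize}
\item The step from $t-1$ to $t$ uses \cref{eq:joint_inner_update} and replaces $\nabla\mathcal L_t(\mathbf u_{t-1})$ by $\mathbf g_t$.
\item By local Lipschitz continuity of $\nabla\mathcal L_t$ with constant $\beta_t$, the replacement error is at most $\beta_t\|\mathbf u_{t-1}-\mathbf u_0\|=\mathcal O(\eta_\theta)$.
\item Multiplying by $\eta_\theta\|\mathbf P\|$ makes this error $\mathcal O(\eta_\theta^2)$.
\end{itemize}
The "sufficiently small step" assumption guarantees that all iterates stay inside the neighbourhood where the Lipschitz constants apply, so the induction closes.

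Second, assume $\nabla\mathcal L_{\mathrm{joint}}$ is locally $\beta$-Lipschitz near $\mathbf u_0$; this is the same local smoothness assumption, now applied to the joint loss. The standard descent-lemma bound then gives
$\bigl|\mathcal L_{\mathrm{joint}}(\mathbf u_0+\Delta)-\mathcal L_{\mathrm{joint}}(\mathbf u_0)-\langle\mathbf g_{\mathrm{joint}},\Delta\rangle\bigr|\le\tfrac{\beta}{2}\|\Delta\|^2=\mathcal O(\eta_\theta^2)$.
Substituting the expansion of $\Delta$ yields
$\langle\mathbf g_{\mathrm{joint}},\Delta\rangle=-\eta_\theta\sum_t\langle\mathbf g_{\mathrm{joint}},\mathbf P\mathbf g_t\rangle+\langle\mathbf g_{\mathrm{joint}},\mathbf r\rangle$.
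The last term is at most $\|\mathbf g_{\mathrm{joint}}\|\,\mathcal O(\eta_\theta^2)$ by Cauchy--Schwarz. Collecting the remainders gives \cref{eq:meta_joint_expand}.

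The main obstacle is bookkeeping rather than depth. The $\mathcal O(\cdot)$ constants must be uniform: they depend on $T'$, $\rho$, the local Lipschitz constants, and the gradient norms at $\mathbf u_0$, but not on $\eta_\theta$. I would make this explicit by fixing a ball $B(\mathbf u_0,R)$ on which all the constants hold. I would then require
$\eta_\theta\le R\big/\bigl(2\max(1,\rho)\sum_t\sup_B\|\nabla\mathcal L_t\|\bigr)$,
which keeps every iterate inside the ball. One further point needs care: the joint loss in \cref{eq:metaprep_joint_loss} is evaluated at $(\boldsymbol\theta_{T'},\boldsymbol\psi_{T'})$, so viewing it as a function of the joint state $\mathbf u$ is exactly the setting the statement uses. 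The post-sequence refinement step \cref{eq:metaprep_outer} lies outside the proposition and does not need to be treated.
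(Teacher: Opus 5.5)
Your proposal is correct and follows the same route as the paper: a first-order Taylor expansion of $\mathcal L_{\mathrm{joint}}$ at $\mathbf u_0$, into which the trajectory expansion of $\mathbf u_{T'}-\mathbf u_0$ is substituted. The paper asserts that trajectory expansion without proof and leaves implicit both the smoothness of $\mathcal L_{\mathrm{joint}}$ and the uniformity of the $\mathcal O(\eta_\theta^2)$ constants, so your induction over the inner steps, your explicit Lipschitz assumption on $\nabla\mathcal L_{\mathrm{joint}}$, and your step-size condition keeping the iterates in a fixed ball add rigor to the same argument rather than changing it.
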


\noindent\textit{Proof.}
A first-order Taylor expansion around $\mathbf u_0$ gives
\begin{equation}
\mathcal L_{\mathrm{joint}}
(\mathbf u_{T'})
=
\mathcal L_{\mathrm{joint}}
(\mathbf u_0)
+
\left\langle
\mathbf g_{\mathrm{joint}},
\mathbf u_{T'}-\mathbf u_0
\right\rangle
+
\mathcal O
\left(
\|\mathbf u_{T'}-\mathbf u_0\|_2^2
\right).
\end{equation}
Substituting Eq.~\eqref{eq:seq_first_order} yields
Eq.~\eqref{eq:meta_joint_expand}.
\hfill$\square$

Proposition~\ref{prop:metaprep} characterizes the local effect of the pseudo-sequential trajectory underlying MetaPrep.
A pseudo-task update contributes to decreasing the post-sequence joint objective to first order when
\begin{equation}
\left\langle
\mathbf g_{\mathrm{joint}},
\mathbf P\mathbf g_t
\right\rangle
>
0.
\label{eq:positive_compatibility}
\end{equation}
Such an update is locally compatible with a descent direction that remains beneficial over the broader pseudo-semantic space.
Conversely, a negative inner product indicates local conflict between adaptation to the current pseudo task and performance on the joint pseudo-semantic objective.

This result does not imply that MetaPrep eliminates forgetting or explicitly enforces positive gradient agreement between all pseudo tasks.
More specifically, the Reptile-style meta-update used by MetaPrep does not directly optimize the inner products in Eq.~\eqref{eq:positive_compatibility}.
Rather, Proposition~\ref{prop:metaprep} shows that the \emph{post-sequence objective used to construct the meta-update} is locally sensitive to the compatibility between sequential adaptation directions and the broader pseudo-semantic objective, which provides a distinct training signal from ordinary static post-training on shuffled upstream data.

\myPara{Relation to Cross-Task Gradient Agreement}
A more explicit pairwise interpretation follows under an additional approximation.
Because
$\mathcal D_{\mathrm{joint}}$
contains held-out observations from all discovered pseudo concepts, suppose its local gradient can be approximated by the following weighted combination:
\begin{equation}
\mathbf g_{\mathrm{joint}}
\approx
\sum_{s=1}^{T'}
\omega_s\mathbf g_s,
\qquad
\omega_s\geq0,
\qquad
\sum_{s=1}^{T'}\omega_s=1.
\label{eq:joint_gradient_approx}
\end{equation}
This approximation requires the joint set to sufficiently represent the discovered pseudo-semantic space and does not hold as an identity in general for finite sampled gradients.

Substituting Eq.~\eqref{eq:joint_gradient_approx} into
Eq.~\eqref{eq:meta_joint_expand} gives
\begin{equation}
\mathcal L_{\mathrm{joint}}
(\mathbf u_{T'})
\approx
\mathcal L_{\mathrm{joint}}
(\mathbf u_0)
-
\eta_{\theta}
\sum_{s=1}^{T'}
\sum_{t=1}^{T'}
\omega_s
\left\langle
\mathbf g_s,
\mathbf P\mathbf g_t
\right\rangle
+
\mathcal O(\eta_{\theta}^{2}).
\label{eq:pairwise_gradient}
\end{equation}
For approximately balanced pseudo tasks,
$\omega_s=1/T'$,
yielding
\begin{equation}
\mathcal L_{\mathrm{joint}}
(\mathbf u_{T'})
\approx
\mathcal L_{\mathrm{joint}}
(\mathbf u_0)
-
\frac{\eta_{\theta}}{T'}
\sum_{s=1}^{T'}
\sum_{t=1}^{T'}
\left\langle
\mathbf g_s,
\mathbf P\mathbf g_t
\right\rangle
+
\mathcal O(\eta_{\theta}^{2}).
\label{eq:pairwise_gradient_balanced}
\end{equation}

The diagonal terms describe descent on individual pseudo tasks, whereas the off-diagonal terms characterize local interactions between different adaptation directions.
Positive interactions correspond to locally compatible updates, while negative interactions indicate gradient conflict.
Therefore, under the approximation in Eq.~\eqref{eq:joint_gradient_approx}, better post-sequence joint performance is associated with regions in which sequential adaptation induces less destructive local interaction.

This provides a local interpretation of \emph{continual learnability}: MetaPrep does not merely seek a representation with low static upstream loss, but evaluates whether the representation remains effective after successive pseudo-concept updates during downstream learning.
The pseudo sequence is intended to expose the encoder to sequential interference rather than to exactly reproduce the downstream blurry-stream distribution.

\myPara{Role of the Meta-Update}
After processing the pseudo continual sequence, MetaPrep performs joint refinement of the encoder using the held-out objective:
\begin{equation}
\hat{\boldsymbol{\theta}}_{T'}^{(k)}
=
\boldsymbol{\theta}_{T'}^{(k)}
-
\eta_{\theta}
\nabla_{\boldsymbol{\theta}}
\mathcal L_{\mathrm{joint}}^{(k)}
\left(
\boldsymbol{\theta}_{T'}^{(k)},
\boldsymbol{\psi}_{T'}^{(k)}
\right),
\label{eq:joint_refine_appendix}
\end{equation}
followed by the first-order meta-update
\begin{equation}
\boldsymbol{\theta}^{(k)}
=
\boldsymbol{\theta}^{(k-1)}
+
\eta_{\mathrm{meta}}
\left(
\hat{\boldsymbol{\theta}}_{T'}^{(k)}
-
\boldsymbol{\theta}^{(k-1)}
\right).
\label{eq:meta_outer_appendix}
\end{equation}
Hence, the encoder initialization is moved toward a parameter state obtained \emph{after} both sequential perturbation and joint evaluation.
This is fundamentally different from ordinary joint post-training, where the update is determined directly from shuffled upstream observations without any exposure to sequential interference.
Repeating the process across meta-epochs yields a GCL-oriented initialization
$\boldsymbol{\theta}^{\ast}$ for downstream continual adaptation under evolving observations.

\subsection{StreamAlign as Reference-Guided Geometry Transport}
\label{app:streamalign_theory}

At learning step $t$, let the online encoder produce row-vector representations
\begin{equation}
\mathbf f_i^t
=
f_{\boldsymbol{\theta}_t}(\mathbf x_i)
\in\mathbb R^d,
\end{equation}
and let
$\boldsymbol{\Sigma}_{\mathrm{cur}}^t$
denote their empirical covariance.
MetaPrep provides the fixed geometry prior
$\boldsymbol{\Sigma}_{\mathrm{pre}}$.
Because the feature dimension can exceed both the online batch size and the number of pseudo-class prototypes, these empirical covariance matrices can be rank deficient.
We therefore analyze the regularized matrices used for numerical transport:
\begin{equation}
\widetilde{\boldsymbol{\Sigma}}_{\mathrm{cur}}^t
=
\boldsymbol{\Sigma}_{\mathrm{cur}}^t
+
\epsilon\mathbf I,
\qquad
\widetilde{\boldsymbol{\Sigma}}_{\mathrm{pre}}
=
\boldsymbol{\Sigma}_{\mathrm{pre}}
+
\epsilon\mathbf I,
\qquad
\epsilon>0.
\label{eq:regularized_cov}
\end{equation}

\myPara{Reference-Guided Second-Order Geometry Transport}
Since the regularized matrices are positive definite, let
\begin{equation}
\widetilde{\boldsymbol{\Sigma}}_{\mathrm{cur}}^t
=
\mathbf L_{\mathrm{cur}}^t
(\mathbf L_{\mathrm{cur}}^t)^{\top},
\qquad
\widetilde{\boldsymbol{\Sigma}}_{\mathrm{pre}}
=
\mathbf L_{\mathrm{pre}}
\mathbf L_{\mathrm{pre}}^{\top}
\label{eq:chol_appendix}
\end{equation}
be their Cholesky decompositions.
Under the row-vector convention adopted in the main paper, define
\begin{equation}
\mathbf A_t
=
 (\mathbf L_{\mathrm{cur}}^t)^{-1}
\mathbf L_{\mathrm{pre}},
\qquad
\hat{\mathbf f}_i^t
=
\mathbf f_i^t\mathbf A_t.
\label{eq:transport_appendix}
\end{equation}

The transform is constructed from the Cholesky factors of the transient and reference covariance matrices and therefore depends explicitly on their second-order discrepancy. Under this row-vector convention, it should be interpreted as a reference-guided transport toward the upstream geometry rather than as an exact covariance-matching operator. The regularization in Eq.~\eqref{eq:regularized_cov} ensures numerical stability and helps avoid Cholesky failure when the empirical covariance is rank deficient, as can occur when the batch or prototype count is smaller than the feature dimension. This second-order correction alone does not imply that individual online features recover their upstream values, that the complete downstream distribution becomes identical to the upstream distribution, or that class-level discriminability is automatically preserved.

\myPara{Interpretation of the Stable Geometry Prior}
After MetaPrep, the structural prior used by StreamAlign is computed from the $M$ refined pseudo-class prototypes:
\begin{equation}
\boldsymbol{\Sigma}_{\mathrm{pre}}
=
\frac{1}{M-1}
\sum_{m=1}^{M}
(\boldsymbol{\mu}_m-\bar{\boldsymbol{\mu}})
(\boldsymbol{\mu}_m-\bar{\boldsymbol{\mu}})^{\top}.
\label{eq:pre_cov_appendix}
\end{equation}
Each prototype averages representations assigned to one pseudo class, so
Eq.~\eqref{eq:pre_cov_appendix}
emphasizes variation among pseudo-semantic centers while suppressing within-cluster sample variation.
In contrast,
$\boldsymbol{\Sigma}_{\mathrm{cur}}^t$
is computed from the current online mini-batch and can vary strongly with sparse, temporally mixed observations.

The two covariance matrices therefore have different statistical roles.
$\boldsymbol{\Sigma}_{\mathrm{pre}}$
is not intended to be an unbiased estimator of the instantaneous downstream covariance.
Instead, it serves as a \emph{target geometry} encoding the broader organization of the refined upstream pseudo-semantic space.
We call it stable in an operational sense because it is estimated once from the upstream reference set and kept fixed throughout downstream adaptation, whereas
$\boldsymbol{\Sigma}_{\mathrm{cur}}^t$
changes with every incoming batch.
Accordingly, the prior can become less representative under severe or long-term distribution shift, as discussed in the limitations of the main paper and its deployment assumptions over extended horizons.

\subsection{Controlled Representation Reconciliation}
\label{app:reconciliation_theory}

Directly replacing the online representation
$\mathbf f_i^t$
with its reference-aligned counterpart
$\hat{\mathbf f}_i^t$
would maximize the influence of the upstream geometry but may suppress useful stream-specific changes.
StreamAlign therefore interpolates between the two:
\begin{equation}
\mathbf f_{\mathrm{trans},i}^{t}
=
(1-\alpha)\mathbf f_i^t
+
\alpha\hat{\mathbf f}_i^t,
\qquad
\alpha\in[0,1].
\label{eq:interpolate_appendix}
\end{equation}

\begin{proposition}[Bounded representation correction]
\label{prop:bounded_reconciliation}
For any
$\mathbf f_i^t$,
$\hat{\mathbf f}_i^t$,
and $\alpha\in[0,1]$,
the reconciled representation satisfies
\begin{equation}
\left\|
\mathbf f_{\mathrm{trans},i}^{t}
-
\mathbf f_i^t
\right\|_2
=
\alpha
\left\|
\hat{\mathbf f}_i^t
-
\mathbf f_i^t
\right\|_2,
\label{eq:bounded_online}
\end{equation}
and
\begin{equation}
\left\|
\mathbf f_{\mathrm{trans},i}^{t}
-
\hat{\mathbf f}_i^t
\right\|_2
=
(1-\alpha)
\left\|
\hat{\mathbf f}_i^t
-
\mathbf f_i^t
\right\|_2.
\label{eq:bounded_reference}
\end{equation}
\end{proposition}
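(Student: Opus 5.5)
The plan is a direct algebraic rewriting of the interpolation in Eq.~\eqref{eq:interpolate_appendix}, followed by absolute homogeneity of the Euclidean norm. No property of the transport matrix $\mathbf A_t$, the covariance matrices, or the Cholesky factors is needed. The statement holds for arbitrary vectors $\mathbf f_i^t,\hat{\mathbf f}_i^t\in\mathbb R^d$, so I treat them as fixed but otherwise unconstrained.

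First, subtract $\mathbf f_i^t$ from both sides of Eq.~\eqref{eq:interpolate_appendix} and regroup:
\begin{equation*}
\mathbf f_{\mathrm{trans},i}^{t}-\mathbf f_i^t
=(1-\alpha)\mathbf f_i^t+\alpha\hat{\mathbf f}_i^t-\mathbf f_i^t
=\alpha\bigl(\hat{\mathbf f}_i^t-\mathbf f_i^t\bigr).
\end{equation*}
Taking $\|\cdot\|_2$ and using $\|c\mathbf v\|_2=|c|\,\|\mathbf v\|_2$ with $|\alpha|=\alpha$ for $\alpha\in[0,1]$ gives Eq.~\eqref{eq:bounded_online}.

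Second, subtract $\hat{\mathbf f}_i^t$ instead:
\begin{equation*}
\mathbf f_{\mathrm{trans},i}^{t}-\hat{\mathbf f}_i^t
=(1-\alpha)\bigl(\mathbf f_i^t-\hat{\mathbf f}_i^t\bigr).
\end{equation*}
Taking norms gives Eq.~\eqref{eq:bounded_reference}. Here I need two small facts: $1-\alpha\ge 0$ on $[0,1]$, so $|1-\alpha|=1-\alpha$, and $\|\mathbf f_i^t-\hat{\mathbf f}_i^t\|_2=\|\hat{\mathbf f}_i^t-\mathbf f_i^t\|_2$ by symmetry of the norm.

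There is no substantive obstacle. The only point requiring care is the role of $\alpha\in[0,1]$: it is what lets the absolute values drop, and for $\alpha$ outside this range the identities would hold only with $|\alpha|$ and $|1-\alpha|$. I would also remark that adding the two identities gives
\begin{equation*}
\|\mathbf f_{\mathrm{trans},i}^t-\mathbf f_i^t\|_2+\|\mathbf f_{\mathrm{trans},i}^t-\hat{\mathbf f}_i^t\|_2=\|\hat{\mathbf f}_i^t-\mathbf f_i^t\|_2.
\end{equation*}
This shows that $\mathbf f_{\mathrm{trans},i}^t$ lies on the segment between the plastic and reference-aligned features, which is the ``controlled correction'' interpretation of the proposition.
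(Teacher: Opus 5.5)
Your proof is correct and matches the paper's: both rewrite the interpolation as $\mathbf f_{\mathrm{trans},i}^t-\mathbf f_i^t=\alpha(\hat{\mathbf f}_i^t-\mathbf f_i^t)$ and $\mathbf f_{\mathrm{trans},i}^t-\hat{\mathbf f}_i^t=(1-\alpha)(\mathbf f_i^t-\hat{\mathbf f}_i^t)$, then take norms. You also state explicitly that $\alpha,1-\alpha\ge 0$ and that the norm is symmetric, steps the paper leaves implicit.
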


\noindent\textit{Proof.}
From Eq.~\eqref{eq:interpolate_appendix}, the following identity holds:
\begin{equation}
\mathbf f_{\mathrm{trans},i}^{t}
-
\mathbf f_i^t
=
\alpha
(\hat{\mathbf f}_i^t-\mathbf f_i^t),
\end{equation}
which immediately gives
Eq.~\eqref{eq:bounded_online}.
Similarly,
\begin{equation}
\mathbf f_{\mathrm{trans},i}^{t}
-
\hat{\mathbf f}_i^t
=
(1-\alpha)
(\mathbf f_i^t-\hat{\mathbf f}_i^t),
\end{equation}
which directly gives the stated relation in
Eq.~\eqref{eq:bounded_reference}.
\hfill$\square$

Proposition~\ref{prop:bounded_reconciliation} establishes a controlled interpolation property rather than an information-preservation guarantee.
When $\alpha=0$, the online representation is left unchanged.
When $\alpha=1$, the reconciled feature becomes fully reference-aligned.
For $0<\alpha<1$, the deviation from the plastic online representation is exactly an $\alpha$ fraction of the full reconstruction displacement.
Thus, StreamAlign introduces a controllable structural correction instead of projecting the online feature completely back to the upstream geometry during interpolation.

This property also provides a local explanation for the empirical behavior of the reconciliation weight.
Although the reference-guided transport characterizes the desired second-order direction of the reconstructed representation, it does not guarantee preservation of all stream-specific discriminative information.
Aggressive alignment can therefore suppress useful online adaptation, whereas intermediate values retain a direct contribution from the plastic representation.

\myPara{Why Semantic Reconciliation is Necessary}
Second-order geometry does not uniquely determine semantic organization.
Different class configurations can share the same global covariance matrix.
Therefore, even exact covariance matching would not imply that same-class samples remain compact or that different classes remain separated.

StreamAlign addresses this ambiguity using the supervised semantic reconciliation objective.
For normalized representations
$\mathbf z_a$
and
$\mathbf z_p$,
the positive-pair term is
\begin{equation}
\ell_{a,p}^{t}
=
-\log
\frac{
\exp(\mathbf z_a^{\top}\mathbf z_p/\tau)
}{
\sum_{q\in\mathcal V_t\setminus\{a\}}
\exp(\mathbf z_a^{\top}\mathbf z_q/\tau)
}.
\label{eq:contrastive_appendix}
\end{equation}
For fixed competing similarities, increasing the positive similarity
$\mathbf z_a^{\top}\mathbf z_p$
decreases this objective, whereas increasing similarity to competing representations enlarges the denominator.
Averaging over same-label positives therefore encourages semantic consistency across the plastic and reference-aligned representation spaces while discouraging collapse across different classes.

This construction is also compatible with sparse online batches encountered during downstream adaptation.
Because each observed sample contributes both an original view
$\mathbf f_i^t$
and a reconstructed view
$\hat{\mathbf f}_i^t$,
the two views provide a same-label positive pair even when that class appears only once in the current mini-batch.
Nevertheless, the contrastive objective does not guarantee a fixed classification margin.
It supplies semantic supervision that is absent from covariance matching alone, encouraging cross-view consistency at the class level.

The complete downstream objective
\begin{equation}
\mathcal L_{\mathrm{GCL}}^t
=
\mathcal L_{\mathrm{main}}^t
+
\lambda
\mathcal L_{\mathrm{con}}^t
\label{eq:gcl_appendix}
\end{equation}
therefore combines complementary signals:
$\mathcal L_{\mathrm{main}}^t$
keeps the learner responsive to current observations, while
$\mathcal L_{\mathrm{con}}^t$
encourages semantic compatibility between the evolving and reference-aligned representation spaces.
Geometric reconstruction supplies a stable global structural reference, whereas semantic reconciliation protects the class-level organization required for downstream prediction throughout online adaptation.

\subsection{Unified Interpretation}
\label{app:unified_theory}

The above analyses characterize complementary properties of the two stages of MePo++.

For MetaPrep, Proposition~\ref{prop:metaprep} shows that the held-out objective after pseudo-sequential adaptation is locally determined by the compatibility between sequential adaptation directions and the broader pseudo-semantic descent direction.
Under the additional approximation in Eq.~\eqref{eq:joint_gradient_approx},
Eqs.~\eqref{eq:pairwise_gradient}--\eqref{eq:pairwise_gradient_balanced}
further expose local interactions among pseudo-task gradients.
These results provide a mechanism through which MetaPrep can favor initializations that are less susceptible to destructive interference across sequential updates and evolving pseudo-concept streams.

For StreamAlign, the reference-guided geometry transport constructs each reconstruction from the transient and stable covariance factors, while Proposition~\ref{prop:bounded_reconciliation} shows that the influence of this structural reference on each online representation is explicitly controlled by $\alpha$ across the continual stream.
The semantic objective complements these geometric properties because global covariance alone does not determine class-level discriminability during continual adaptation.

Taken together, the two components act on different stages of continual representation adaptation:
\begin{equation}
\boxed{
\begin{aligned}
\bigl(\boldsymbol{\theta}_0,\mathcal{D}_{\mathrm{pre}}\bigr)
&\xrightarrow{\ \mathrm{MetaPrep}\ }
\bigl(\boldsymbol{\theta}^{\ast},\boldsymbol{\Sigma}_{\mathrm{pre}}\bigr),
\\
\bigl(\mathbf f_i^t,\boldsymbol{\Sigma}_{\mathrm{cur}}^t,
\boldsymbol{\Sigma}_{\mathrm{pre}}\bigr)
&\xrightarrow{\ \mathrm{StreamAlign}(\alpha)\ }
\mathbf f_{\mathrm{trans},i}^{t}.
\end{aligned}}
\label{eq:unified_interpretation}
\end{equation}

MetaPrep changes the \emph{starting representation} from which downstream sequential adaptation proceeds, whereas StreamAlign constrains the \emph{trajectory of the evolving representation} after deployment.
Their roles are therefore complementary rather than redundant:
the former prepares the representation to accommodate future updates, while the latter prevents transient online observations from unconstrainedly determining the global representation geometry.

\end{document}